\def\BibTeX{{\rm B\kern-.05em{\sc i\kern-.025em b}\kern-.08em
    T\kern-.1667em\lower.7ex\hbox{E}\kern-.125emX}}
\newtheorem{claim}{Claim}
\newtheorem{corollary}{Corollary}
\newtheorem{problem}{Problem}
\definecolor{Questions}{HTML}{1F77B4}
\newtheorem{theorem}{Theorem}
\numberwithin{theorem}{section}
\newcommand{\longsquiggly}{\xymatrix{{}\ar@{~>}[r]&{}}}
\begin{document}
\title{Active Collaborative Localization in \\ Heterogeneous Robot Teams}

% \author{Paper-ID [add your ID here]}

\author{Igor Spasojevic$^{*}$, Xu Liu$^{*}$, Alejandro Ribeiro$^{}$, George J. Pappas$^{}$, Vijay Kumar$^{}$% <-this % stops a space
% \thanks{*Equal contribution.}% <-this % stops a space
\thanks{
*Equal contribution. 
This work was supported by The Institute for Learning-Enabled Optimization at Scale (TILOS) funded by the National Science Foundation (NSF) under NSF Grant CCR-2112665, IoT4Ag ERC funded through NSF Grant EEC-1941529, the ARL DCIST CRA W911NF-17-2-0181, and ONR Grant N00014-20-1-2822.
All authors are with the GRASP Laboratory,
        University of Pennsylvania, United States
        {\tt\small \{igorspas, liuxu, aribeiro, pappasg, kumar\}@seas.upenn.edu}.}%
}

\maketitle

\begin{abstract}

Accurate and robust state estimation is critical for autonomous navigation of robot teams. %
This task is especially challenging for large groups of size, weight, and power (SWAP) constrained aerial robots operating in perceptually-degraded GPS-denied environments. %
We can, however, actively increase the amount of perceptual information available to such robots by augmenting them with a small number of more expensive, but less resource-constrained, agents.
Specifically, the latter can serve as sources of perceptual information themselves. %
In this paper, we study the problem of optimally positioning (and potentially navigating) a small number of more capable agents to enhance the perceptual environment for their lightweight, inexpensive, teammates that only need to rely on cameras and IMUs. %
We propose a numerically robust, computationally efficient approach to solve this problem via nonlinear optimization. % 
Our method outperforms the standard approach based on the greedy algorithm, while matching the accuracy of a heuristic evolutionary scheme for global optimization at a fraction of its running time. %
Ultimately, we validate our solution in both photorealistic simulations and real-world experiments. %
In these experiments, we use lidar-based autonomous ground vehicles as the more capable agents, and vision-based aerial robots as their SWAP-constrained teammates. %
Our method is able to reduce drift in visual-inertial odometry by as much as $\mathbf{90 \%}$, and it  outperforms random positioning of lidar-equipped agents by a significant margin. %
Furthermore, our method can be generalized to different types of robot teams with heterogeneous perception capabilities. %
It has a wide range of applications, such as surveying and mapping challenging dynamic environments, and enabling resilience to large-scale perturbations that can be caused by earthquakes or storms. %

\end{abstract}
\section{Introduction}

The low size, weight, and power requirements of inertial measurement units (IMUs) and cameras have made them a standard combination of sensors for SWAP-constrained flying robots such as agile micro aerial vehicles.
Nevertheless, harnessing these sensors for accurate state estimation requires overcoming several challenges.
In particular, visual-inertial odometry (VIO) quickly accumulates significant drift in unstructured, dynamic, or featureless environments. 
Such drift can dramatically degrade the efficacy and safety of an autonomous system.

Effective use of cameras rests upon the presence of a sufficient quantity of visual information in the form of texture or landmarks in the environment. 
Information-rich scenarios display a large set of visual cues, typically in the form of landmarks spread throughout the surroundings of the agent. 
In the absence of such visual features, the quality of state estimates rapidly degrades.
To overcome this challenge, we consider augmenting low-SWAP robots with a small number of agents without such stringent payload constraints to robustly navigate unstructured environments.

%%%%%%%%%%%%%%%%%%%%%%%%%%%%%%%%%%%%%%%%%
\begin{figure}[t!]
\centering
\includegraphics[trim=0 0 0 0, clip,width=0.9\columnwidth]{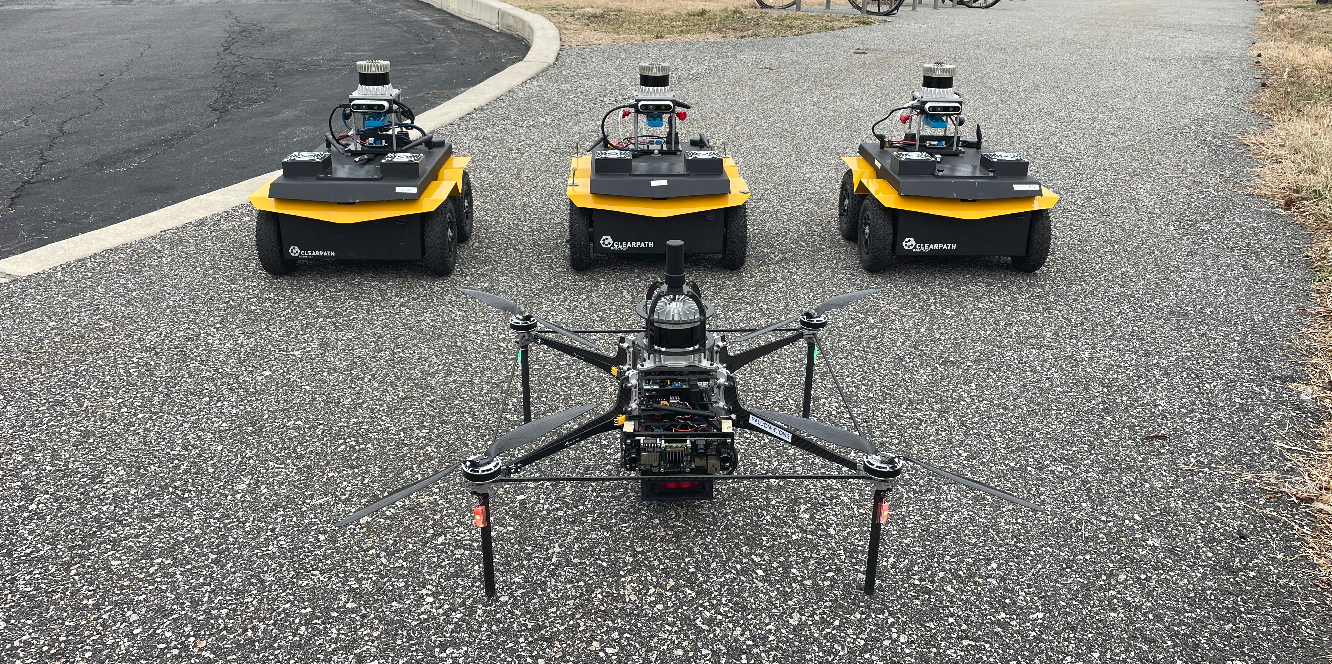}
    \caption{Robot platforms used in our experiments. Both platforms are capable of long-range GPS-denied autonomous navigation. In the experiments of this paper, the UAV relies only on cameras and the IMU for state estimation, while the UGVs rely on lidars. We refer readers to \cite{liu2022large} and \cite{miller2022stronger} for details on our UAV and UGV platforms.}
    \label{fig:robots}
    % \vspace{-0.2in}
\end{figure}
%%%%%%%%%%%%%%%%%%%%%%%%%%%%%%%%%%%%%%%%

The predominant way of solving such problems for a single agent involves planning perception-aware trajectories. %
Intuitively, this means choosing paths that only traverse regions of the environment with sufficiently rich perceptual content. %
Naturally, this limits the operating space of the robot. %
However, in multi-agent settings, we can leverage the fact that we can design heterogeneous teams comprising of agents with different perceptual capabilities. %
In particular, a select subset of just a few perceptually-advantaged robots can act as landmarks for their vision-driven teammates. %
Our approach involves optimizing the positions of the former subset of agents using a second-order smooth optimization scheme.

We summarize our contributions as follows: %
\begin{enumerate}
    \item We propose a framework to tackle the multi-robot localization problem, by letting select members of the team serve as landmarks for their teammates with different sensing modalities. %
    \item We propose an algorithm for the underlying active robot positioning problem that outperforms an approximate method based on a greedy algorithm. Furthermore, numerical results suggest that our algorithm matches the accuracy of a heuristic evolutionary scheme for global optimization while having a significantly lower computational demand. 
    \item We validate our method in photorealistic simulations and real-world experiments using a robot team composed of one UAV and multiple UGVs as shown in \cref{fig:robots}. In particular, we show that our method can reduce VIO drift by as much as 90\%. 
\end{enumerate}

The rest of the paper is organized as follows. %
Section \ref{sec: related work} summarizes the related work. %
Section \ref{sec:problem-form} gives a detailed problem formulation, followed by Section \ref{sec:method} in which we present our approach. 
The analysis of our algorithm is presented in Section \ref{sec: algorithm-analysis},
and its numerical performance is showcased in Section \ref{sec:numerical-analysis}. %
The simulation and real-world experiments in Section \ref{sec:experiments} ultimately demonstrate the efficacy of our method. %

\section{Related Work}
\label{sec: related work}

\subsection{Vision-based State Estimation}
Vision-based state estimation has been maturing and gaining popularity during the past decade. Powerful monocular-based state estimation algorithms such as the classical structure from motion algorithm and recent state-of-the-art monocular odometry methods \cite{newcombe2011dtam, engel2014lsd} can estimate camera poses and 3D structures. However, such algorithms cannot be directly used for robot navigation since the absolute scale of the world is not observable with a single camera. VIO algorithms can estimate poses in metric scale and run up to the IMU rate. Therefore, they are commonly used in robotics applications \cite{delmerico2018benchmark}. State-of-the-art VIO algorithms \cite{qin2018vins, sun2018robust, mur2015orb, campos2021orb} have robust performance in high-speed 3D navigation with aggressive motions \cite{mohta2018fast, mohta2018experiments, lin2018autonomous, zhou2022swarm}. 
  
However, pure geometric-based methods have some intrinsic limitations: (1) The storage demand is high when maintaining a geometric map over a long trajectory. (2) Geometric-based features are sensitive to changes in viewpoint or lighting conditions. (3) It is difficult to distinguish features extracted from dynamic and static objects, leading to failures in dynamic environments. As a result, they can accumulate significant drift over long trajectories, which leads to unsafe behaviors and errors in mapping. Learning-enabled approaches are used to improve state estimation and mapping. Among them, one popular choice is to use semantic features because they are distinctive, informative, memory-efficient, and robust to viewpoint changes \cite{bao2011semantic, salas2013slam++, bowman2017probabilistic, nicholson2018quadricslam, yang2019cubeslam, shan2020orcvio, chen2020sloam}. The proposed work can be categorized into semantic-based state estimation, where the semantic landmarks are UGVs.

\subsection{Active Perception}

Our work falls within the class of active perception and state estimation \cite{bajcsy1988} problems. 
Some of the earliest methods modeled this task using the framework of partially observable Markov decision processes (POMDP) \cite{sondikPOMDP, kaelbling1998planning}.
This paradigm addresses synthesizing optimal policies, that is, functions mapping the history of observations of an agent to the optimal control for executing a given task. 
However, first examples of solutions to such problems were either developed for linear systems with Gaussian noise \cite{athansLQGhistory}, or problems with a small number of states \cite{kaelbling1998planning}.
Scaling this approach to realistic, and more complex robotic systems proved challenging. 
On the theoretical side, researchers showed that this should come as no surprise \cite{complexityofMDPs}: solving \textit{generic} POMDPs is a PSPACE-hard problem. 
Furthermore, they showed that even with infinite precompute, an optimal controller, in general, might be challenging to implement in an efficient algorithm. 

As a result, the community turned to solving approximate versions of the problem. 
The hope was that by acting optimally according to a proxy objective, perception-aware behavior would emerge. 
One of the first classes of methods involved belief space planning \cite{prentice2009beliefroadmap, bry2011rapidly, van2011lqgmp, van2012motion, charrow2015information, atanasov2015decentralized, sun2020stochastic}. 
Effectively, this meant solving the open-loop planning version of the more difficult POMDP. 
Somewhat unfortunately, \cite{complexityofMDPs} also showed that solving this problem in general is NP-hard. 
To date, the majority of approaches based on BSP have been computationally intensive, and often challenging to implement for real-time use.

Recently, there has been a surge in interest in perception-aware motion planning for agile robots. 
Roughly speaking, this paradigm optimizes trajectories of agents, while ensuring they maintain a sufficient amount of perceptual information within the sensing region of their onboard sensors \cite{falanga2018pampc, zhang2020fisher, tordesillas2022panther, spasojevic2020perception, murali2019acc}. 
However, these approaches focus on single-agent settings. 
Their environment is fixed. 
When generalizing the methods to perception-aware navigation for robot teams, 
naturally, the same algorithms can be used for every member of the team. 
However, none of these works adopt the collaborative approach of altering the perceptual content of the environment created by the agents themselves. 
This is precisely the theme of our paper.

\subsection{Active Team Localization and Target Tracking}

There has been a substantial amount of work on the localization of robot teams using relative observation measurements. 
Many different versions of the problem exist \cite{martinelli2005multi, FranchiMutualLocalizationIROS09, prorok2014accurate, nguyen2020vision, gaoMutualLocalizationRAL22} depending on the nature of allowed communication between teammates, as well as the type of measurement models (e.g. relative position, range, or bearing, etc...). 
Naturally, nonlinear measurement models render the resulting inference problem more computationally challenging. 
Recently, a number of papers have proposed robust algorithms based on convex relaxations that appear to be lossless for a non-negligible regime of noise levels \cite{tian2021distributed, rosen2022distributed}.

Prior work has also addressed the problem of multi-robot active SLAM \cite{kontitsis2013multi, atanasov2015decentralized, chen2020broadcast}, 
however, with several differences. 
For example, \cite{atanasov2015decentralized} does not leverage mutual observation of robots to aid localization. \cite{chen2020broadcast} relies on features in the environment to derive relative measurements between robots, a step that can fail in challenging, featureless environments. \cite{kontitsis2013multi} assumes constant communication and perfect data association, which are not always satisfied in real-world settings. Most importantly, none of the former consider actively positioning landmarks to help robots localize themselves.
The work of \cite{kurazume1994cooperative} considers a problem directly related to ours, albeit with a small team of 
homogeneous robots forming a predefined shape in 2D space. Subsequent works such as \cite{trawny2004optimized, tully2010leap} extend the former, but provide no empirical or theoretical sub-optimality results. 
In this work, we consider a fine-grained measure of localization accuracy of a heterogeneous robot team based on a minimax localization cost.
Furthermore, unlike previous works, we empirically show that a suitable gradient-descent-based algorithm matches the output of an evolutionary method for positioning perceptually-advantaged agents with significant accuracy.

Last but not least, our problem is intimately related to that of target tracking \cite{spletzer2003dynamic, lifeng23trackingTRO, lifengtzoumas19RAL}. 
The latter problem involves positioning the set of tracker agents to allow them to form the best estimate of the target states. 
The parallel to our problem amounts to identifying tracking agents with lidar-equipped ones, and the targets with vision-aided agents. 
Typically, tracker agents communicate among themselves in order to leverage their collective measurements to position themselves optimally. 
Our setup effectively considers the dual of this problem. 
Other notable differences include the fact that we consider a collaborative setting, in addition to allowing no communication between agents except at the start of the mission.

\subsection{Sensor Selection}

Optimally placing the small set of lidar-equipped agents can be viewed as a sensor selection problem. 
Intuitively, this involves choosing out of a set of possible sensor locations the subset that gives minimal uncertainty in the estimates of the unknown quantity being measured subject to suitable, typically budget, constraints. 
In our example, the set is in fact not discrete, but comprises a whole continuum of possible positions in the environment in which we can place the lidar-equipped agents. 
The budget comes simply from the number of such agents at our disposal. 
It has been shown that approximately solving the discrete version of the sensor selection problem to an arbitrary level of suboptimality is NP-hard \cite{feige1998threshold, olshevsky2014minimal}. 
Notable approaches based on the greedy paradigm that have nevertheless worked well in practice include \cite{krause2008near, jawaid2015submodularity, carlone2018attention, tzoumas2020lqg, chamon2020approximate}. 
However, we adopt a different approach based on nonlinear optimization.
\section{Active Collaborative Team Localization}
\label{sec:problem-form}

%%%%%%%%%%%%%%%%%%%%%%%%%%%%%%%%%%%%%%%%%
\begin{figure}[h!]
\centering
\includegraphics[trim=0 0 0 0, clip,width=0.75\columnwidth]{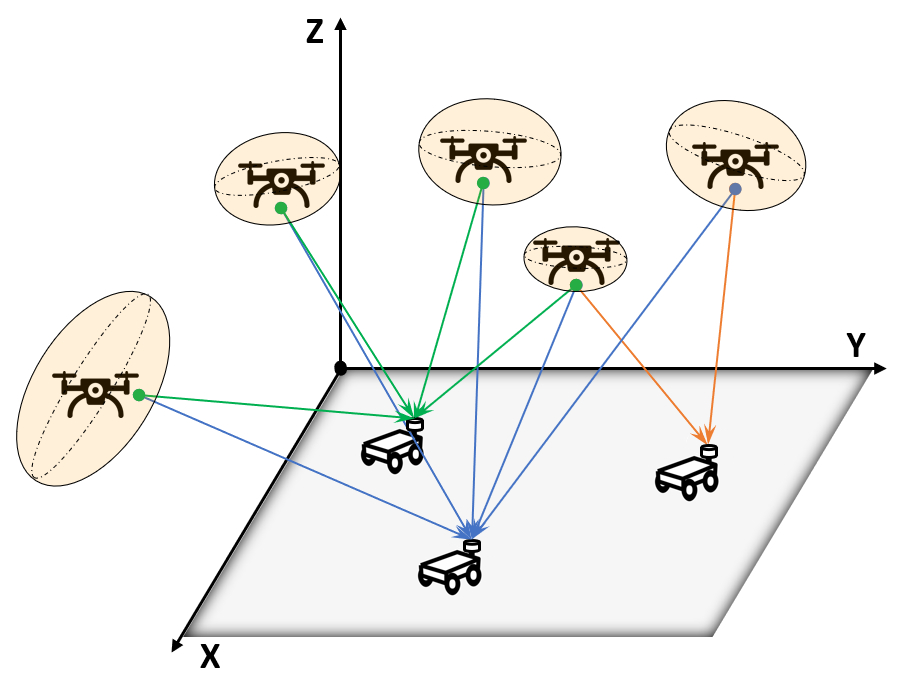}
    \caption{In the absence of reliable visual features or landmarks in the environment, $\mathcal{C}$-agents (UAVs) additionally triangulate their positions using bearing measurements to $\mathcal{L}$-agents (UGVs). We position $\mathcal{L}$-agents to minimize the localization uncertainty (ellipses) of $\mathcal{C}$-agents.}
    \label{fig:robot-team-problem-form}
    % \vspace{-0.2in}
\end{figure}
%%%%%%%%%%%%%%%%%%%%%%%%%%%%%%%%%%%%%%%%

%%%%%%%%%%%%%%%%%%%%%%%%%%%%%%%%%%
%%%%%%% PROTOCOL %%%%%%%%%%%%%%%%%
%%%%%%%%%%%%%%%%%%%%%%%%%%%%%%%%%%

Our robot team is partitioned into two subsets of agents, $\mathcal{L}$ and $\mathcal{C}$. 
Agents in set $\mathcal{L}$, also referred to as ``$\mathcal{L}$-agents'', are expensive lidar-equipped agents that can autonomously position themselves accurately without additional external localization infrastructure. 
Set $\mathcal{C}$ comprises of inexpensive, resource-constrained agents (``$\mathcal{C}$-agents'') that predominantly rely on their onboard cameras for state estimation. 
However, cameras can be used to obtain accurate state estimates only when there is a sufficient quantity of reliable visual cues in the environment. 
We propose a solution for accurate team localization robust to whether or not the latter condition holds. 
In particular, our approach involves leveraging $\mathcal{L}$-agents as artificial landmark features $\mathcal{C}$-agents can use for state estimation in desired regions of space.

In light of the setup $|\mathcal{L}| \ll |\mathcal{C}|$ motivated by budget considerations, it is intuitively clear that certain configurations of $\mathcal{L}$-agents are better than others. 
For example, positioning them in degenerate configurations (such as a single point) would result in worse performance than spreading them further apart.
On the other hand, placing $\mathcal{L}$-agents too far would supply $\mathcal{C}$-agents with bearing measurements with near-vanishing sensitivity to change in their positions. 
Furthermore, tailoring the positioning to enable accurate vision-aided localization in one region of space might render it useless for state estimation in another.

We therefore center our approach around a minimax optimization problem stated as follows.
Given $N$ representative points of operating regions of $\mathcal{C}$-agents (without loss of generality one point per each $\mathcal{C}$-agent so that $|\mathcal{C}| = N$), and a given budget on the number of $\mathcal{L}$-agents, say $|\mathcal{L}| \leq M$, position the $\mathcal{L}$-agents in a way that minimizes the maximum localization uncertainty across all $\mathcal{C}$-agents.
Identifying $\mathcal{C}$ and $\mathcal{L}$ with $[N]$ and $[M]$, respectively, and denoting the locations of $\mathcal{C}$-agents and $\mathcal{L}$-agents by $(\mathbf{x}_i)_{i \in [N]} \in (\mathbb{R}^3)^N$ and $(\mathbf{z}_j)_{j \in [M]} \in (\mathbb{R}^3)^M$, respectively, and the covariance of the resulting state estimate at location $\mathbf{x}_i$ by $\Sigma(\mathbf{x}_i; \ \mathbf{z}_{1:M})$, we have  
\begin{problem} \label{prob:problem_one}
\begin{equation*}
\begin{aligned}
\min_{z_{1:M} \in (\mathbb{R}^3)^M} \ & \max_{i \in [N]} \ tr( \ \Sigma(\mathbf{x}_i; \ \mathbf{z}_{1:M}) \ ) \\
s.t. & \\
& || \mathbf{z}_j ||_2 \leq R_{max} \hspace{9mm} \forall j \in [M] \\
& || \mathbf{z}_j - \mathbf{x}_i ||_2 \geq R_{min} \quad \forall i \in [N], \ \forall j \in [M] \\
\end{aligned}
\label{eqn:original-problem}
\end{equation*}
\end{problem}
The latter two constraints encode the desideratum that we can only position $\mathcal{L}$-agents in a certain region of space, as well as the fact that no $\mathcal{C}$-agent can be closer than a minimum prescribed distance to any $\mathcal{L}$-agent. 
We measure the localization accuracy of each $\mathcal{C}$-agent using the trace of the covariance of its state estimate.
This roughly captures its total uncertainty across all three spatial dimensions. 
Overall, our problem is novel in two ways:
\begin{enumerate}
    \item we propose the paradigm of enabling predominantly vision-driven robot teams to navigate perceptually-challenging environments by using select perceptually-advantaged teammates as visual landmarks; 
    \item furthermore, we model the task using a minimax active robot positioning problem in continuous space; previous work on the topic has typically focused on its discrete version, without a robust minimax objective. 
\end{enumerate}

\textbf{Remark.}
Two special cases of our setup deserve particular mention. 
One involves positioning $\mathcal{L}$-agents in order to ensure high localization accuracy of $\mathcal{C}$-agents along pre-planned trajectories. 
This scenario is subsumed by Problem \ref{prob:problem_one}, as can be seen by placing fictitious $\mathcal{C}$-agents (Fig \ref{fig:traj_planning}) at regular ``discretization'' points along paths to be taken by ``real'' $\mathcal{C}$-agents. 
The second involves using $\mathcal{L}$-agents to provide visual cues at a well-dispersed sample of positions for the purpose of coverage of a given environment.
The latter instantiation of Problem \ref{prob:problem_one} amounts to placing fictitious $\mathcal{C}$-agents at sampled points.

\begin{figure}[h!]
\centering
\includegraphics[trim=0 0 10 0, clip,width=1.0\columnwidth]{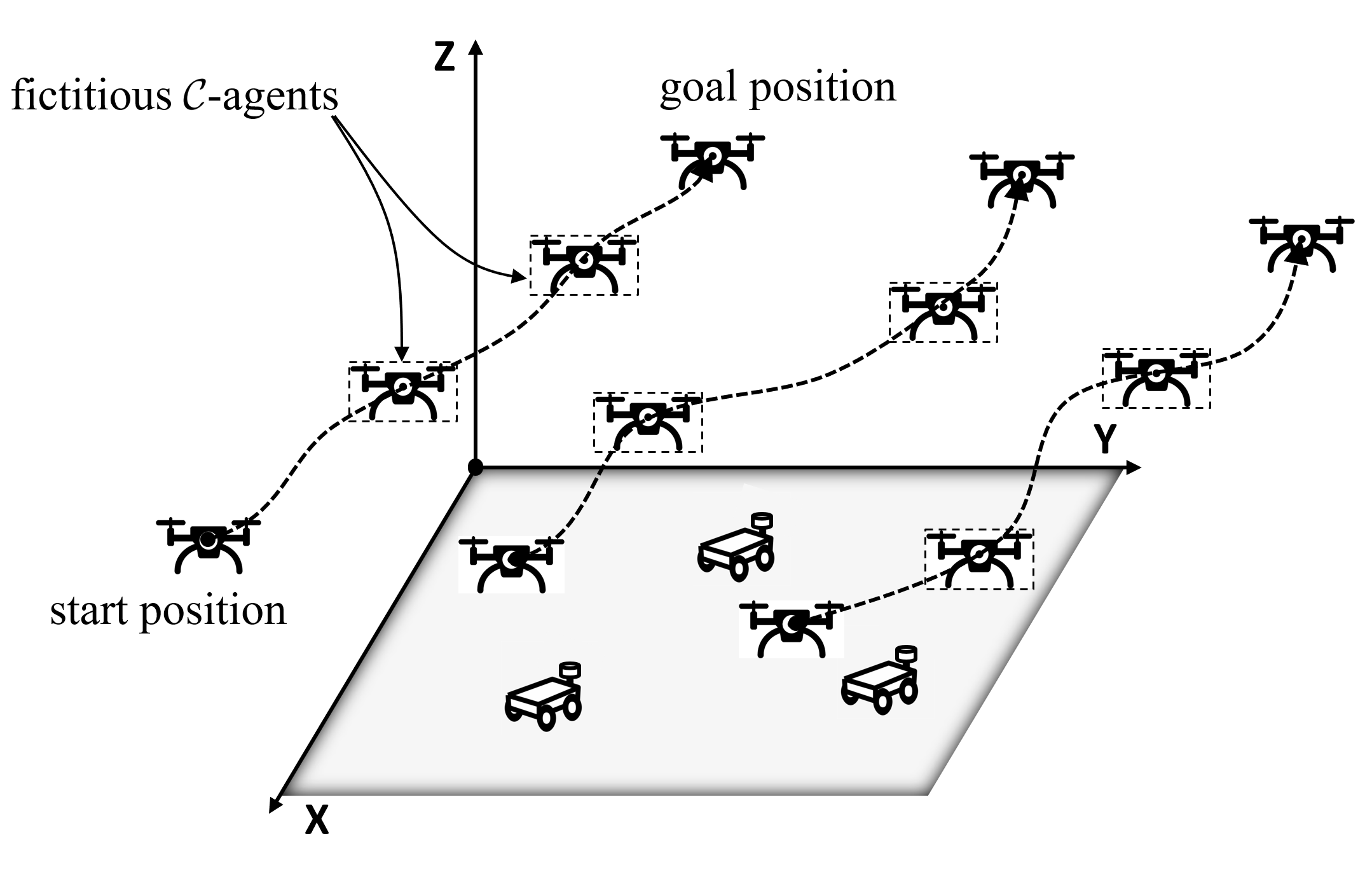}
\captionof{figure}{Placing sufficiently many \textit{fictitious} $\mathcal{C}$-agents along planned trajectories of real $\mathcal{C}$-agents to ensure the latter have accurate state estimates.}
\label{fig:traj_planning}
\end{figure}

Regarding communication, we assume the following protocol.
The team rallies together at the start of the mission, has one agent solve the problem in a centralized way, communicate the positions of $\mathcal{L}$-agents to the rest of the team, and then the mission starts. 
From that moment onward, no explicit communication between teammates takes place.

\section{Active Positioning Algorithm}
\label{sec:method}

%%%%%%%%%%%%%%%%%%%%%%%%%%%%%%%%%%%
%%%% MEASUREMENT MODEL %%%%%%%%%%%%
%%%%%%%%%%%%%%%%%%%%%%%%%%%%%%%%%%%

\subsection{Modelling Sensing and Localization Uncertainty}

The observation model of each $\mathcal{C}$-agent comprises a set of bearing measurements. 
In particular, $\mathcal{C}$-agent $i$ at point $\mathbf{x} \in \mathbb{R}^3$ in the vicinity of setpoint $\mathbf{x}_i$ registers the bearing to $\mathcal{L}$-agent $j$ at position $\mathbf{z}_j \in \mathbb{R}^3$ in the form:
\begin{equation}
\mathbf{y}_{i,j} = \mathbf{h}(\mathbf{x}, \mathbf{z}_j) + \mathcal{N}(0, \ \sigma_m^2 I_3).
\label{eqn:noisy-bearing}
\end{equation}
Here $\mathbf{h}(\mathbf{x}, \mathbf{z}) := \frac{\mathbf{z} - \mathbf{x}}{|| \mathbf{z} - \mathbf{x} ||_2}$ denotes the noiseless bearing measurement function.  
We assume sensor noise is normally distributed with standard deviation $\sigma_m > 0$, and is independent across different $\mathcal{C}$-$\mathcal{L}$ agent pairs.

A critical aspect of our problem involves the \textit{nonlinear} sensing model in Equation \ref{eqn:noisy-bearing}. 
As a result, we now describe the way in which $\mathcal{C}$-agents translate such measurements into estimates of their pose. 
We take the route based on the paradigm of extended Kalman filtering (EKF). 
In particular, when the $\mathcal{C}$-agent at position $\mathbf{x}$ registers bearings to $\mathcal{L}$-agents at positions $\mathbf{z}_1, \mathbf{z}_2, ..., \mathbf{z}_M \in \mathbb{R}^3$, its collection of measurements may be succinctly represented via 
\begin{equation}
\mathbf{y}_{1:M} = \mathbf{H}(\mathbf{x}; \mathbf{z}_{1:M}) + \mathbf{\epsilon},
\end{equation}
where $\mathbf{H}(\mathbf{x}; \mathbf{z}_{1:M}) = [ \mathbf{h}^T(\mathbf{x}, \mathbf{z}_1),  \mathbf{h}^T(\mathbf{x}, \mathbf{z}_2), \dots , \mathbf{h}^T(\mathbf{x}, \mathbf{z}_M) ]^T$, and $\mathbf{\epsilon} = [\mathbf{\epsilon}_1^T, \mathbf{\epsilon}_2^T, \dots, \mathbf{\epsilon}_M^T]^T$ are i.i.d. Gaussian random variables per Equation \ref{eqn:noisy-bearing}.
Assuming a $\mathcal{N}(\mathbf{x}_i, \bar{\Sigma})$ prior on $\mathbf{x}$, the agent approximates the bearing measurement to agent $j$ via the linearized measurement model
\begin{equation}
\bar{\mathbf{y}}_{i,j} + \Delta \mathbf{y}_{i,j} = \underbrace{\mathbf{h}(\mathbf{x}_i, \mathbf{z}_j)}_{=: \bar{\mathbf{y}}_{i,j}} + \frac{\partial \mathbf{h}}{\partial x}(\mathbf{x}_i, \mathbf{z}_j) \underbrace{\Delta \mathbf{x}}_{=: \mathbf{x} - \mathbf{x}_i} + \mathbf{\epsilon_i},
\end{equation}
so that its posterior distribution is approximately Gaussian with the following information (inverse covariance) matrix
\begin{equation}
\mathcal{J}(\mathbf{x}_i; \ \mathbf{z}_{1:M}) = \bar{\Sigma}^{-1} + \sigma_m^{-2} \sum_{j = 1}^M \frac{\partial \mathbf{h}^T}{\partial x}(\mathbf{x}_i, \mathbf{z}_j) \frac{\partial \mathbf{h}}{\partial x}(\mathbf{x}_i, \mathbf{z}_j).
\label{eqn:total-info-matrix}
\end{equation}
Recalling the objective function of Problem \ref{prob:problem_one}, we define
\begin{equation} \label{eqn:EKF_covariance}
\Sigma( \mathbf{x}_i; \ \mathbf{z}_{1:M} ) =  \mathcal{J}^{-1}(\mathbf{x}_i; \ \mathbf{z}_{1:M}) .
\end{equation}
Regarding the sensing model and pose uncertainty of every $\mathcal{C}$-agent, we make two assumptions hereon:
\begin{enumerate}
    \item $\mathcal{C}$-agents have accurate orientation estimates;
    \item their onboard cameras are omnidirectional.
\end{enumerate}
\textbf{Remarks}
For the first assumption, we follow the approximation made in \cite{carlone2018attention}. 
Intuitively, it posits that if bearing measurements arrive at a sufficiently high frequency (approximately every 5 seconds in our experiments), the gyroscope can be used to accurately estimate incremental changes in orientation between successive keyframes.
At latter points, visual information is fused with IMU measurements to bring the orientation uncertainty down to a negligible value.
The second assumption is valid when $\mathcal{C}$-agents are equipped with multiple cameras that together cover a $4\pi$ (in steradians) field of view (FOV). 
It can also be \textit{emulated} by agents equipped with a camera with a limited FOV by periodically performing a $2\pi$ yawing motion at specified setpoints. 
In our physical experiments, we take the latter route.

\subsection{Optimization Algorithm}

Problem \ref{prob:problem_one} is a challenging nonlinear optimization task. 
Nevertheless, several practical approaches come to mind. 
One method involves discretizing the continuous problem with a grid of points sampled from its feasible (``obstacle-free'') region.
Starting from an empty set, this algorithm then greedily picks a subsequent grid point that induces the maximum decrease in the objective value, until a given number of $\mathcal{L}$-agents have been positioned.
A drawback of this approach is that it does not result in provably globally optimal solutions, while incurring a non-negligible computational cost - typically linear in the number of discretization points.
The latter generally has to increase with the desired level of accuracy. 
Another method involves a heuristic evolutionary procedure for global optimization \cite{globalOptimizationAlgo}. 
In spite of promising empirical performance, this algorithm can demand long compute times and it does not come with global optimality guarantees. 
Finally, the third class of approaches includes local gradient-descent based algorithms for smooth optimization. 
This class of methods has shown impressive results, scaling to high-dimensional optimization problems involving training neural networks. 
Nevertheless, their computational performance can vary significantly depending on the exact algorithm being used, as well as the structural properties of the problem being solved.

Our method belongs to the class of gradient-descent-based approaches.
However, further specific design choices are still required. 
In particular, Problem \ref{prob:problem_one} is a non-smooth non-convex optimization problem. 
Lack of global smoothness stems from the minimax form of the objective function that can dramatically slow down the rate of convergence of local algorithms for smooth optimization.
The other challenging aspect of Problem \ref{prob:problem_one} is its non-convexity. 
Indeed, we show there exist instances of the problem that exhibit multiple strict local optima with wildly different objective values. 
One such example is shown in Fig. \ref{fig:landscape}. 
In what follows, we show how to tackle each of these hurdles in turn. 

\begin{figure}[h!]
\centering
\includegraphics[scale=0.3]{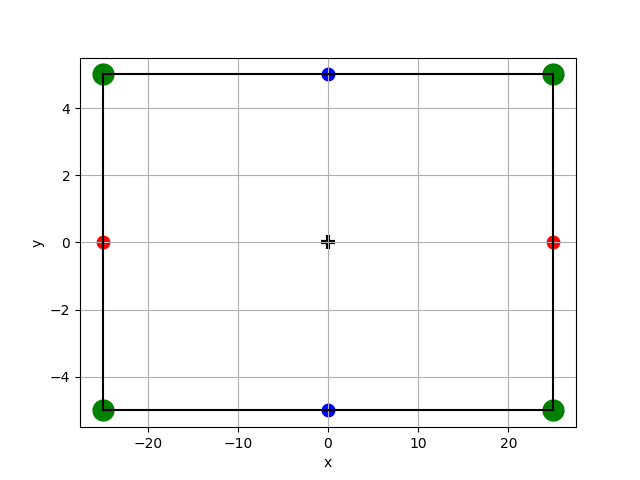}
    \caption{Two strict local minima of the objective function of the problem involving minimizing the maximum localization uncertainty of four $\mathcal{C}$-agents operating at setpoints depicted by green circles with the help of two $\mathcal{L}$-agents. The blue local optimum 
    is more than 
    $500\%$ worse than the red local optimum. The cross at the centroid of the rectangle is a saddle point. }
    \label{fig:landscape}
\end{figure}

To ensure our objective function is globally smooth, we introduce two modifications to the original problem. 
The first involves choosing a positive real number $\delta \ll R_{min}$, and modifying the noiseless bearing measurement function via:
\begin{equation} \label{eq:smooth_bearing}
\mathbf{h}(\mathbf{z}, \mathbf{x}) =  \frac{\mathbf{z} - \mathbf{x}}{|| \mathbf{z} - \mathbf{x} ||_2} \quad \leadsto \quad \mathbf{h}_{\delta}(\mathbf{z}, \mathbf{x}) =  \frac{\mathbf{z} - \mathbf{x}}{\sqrt{|| \mathbf{z} - \mathbf{x} ||_2^2  + \delta^2}}.
\end{equation}
Using the same methodology of extended Kalman filtering, we make the corresponding changes to the posterior covariance estimates of agents at corresponding setpoints. 
For example, the localization uncertainty of the $\mathcal{C}$-agent operating at setpoint $i$ is modified through
\begin{equation} 
\begin{aligned}
\mathcal{J}_{\delta}(\mathbf{x}_i; \ \mathbf{z}_{1:M}) & = \bar{\Sigma}^{-1} + \sigma_m^{-2} \sum_{j = 1}^M \frac{\partial \mathbf{h}_{\delta}^T}{\partial x}(\mathbf{x}_i, \mathbf{z}_j) \frac{\partial \mathbf{h}_{\delta}}{\partial x}(\mathbf{x}_i, \mathbf{z}_j) \\
\Sigma_{\delta}( \mathbf{x}_i; \ \mathbf{z}_{1:M} ) & =  \mathcal{J}_{\delta}^{-1}(\mathbf{x}_i; \ \mathbf{z}_{1:M}). \\
\end{aligned}
\end{equation}
As a result, we solve
\begin{problem} \label{prob:problem_delta}
\begin{equation*}
\begin{aligned}
\min_{z_{1:M} \in (\mathbb{R}^3)^M} \ & \max_{i \in [N]} \ tr( \ \Sigma_{\delta}(\mathbf{x}_i; \ \mathbf{z}_{1:M}) \ ) \\
s.t. & \\
& || \mathbf{z}_j ||_2 \leq R_{max} \hspace{9mm} \forall j \in [M] \\
& || \mathbf{z}_j - \mathbf{x}_i ||_2 \geq R_{min} \quad \forall i \in [N], \ \forall j \in [M] \\
\end{aligned}
\label{eqn:original-problem}
\end{equation*}
\end{problem}
The second modification involves reformulating the minimax Problem \ref{prob:problem_delta}, in general one with non-smooth objective, into a smooth problem. 
We introduce a new decision variable $t \in \mathbb{R}$, and consider the following equivalent 
\begin{problem} \label{prob:epigraph_delta_problem}
\begin{equation} 
\begin{aligned}
\min_{\mathbf{z}_{1:M} \in (\mathbb{R}^3)^M, \ t \geq 0} \ & t \\ 
s.t. & \\
& tr( \ \Sigma( \mathbf{x}_i; \ \mathbf{z}_{1:M} ) \ ) \leq t \quad \forall i \in [N] \\
& || \mathbf{z}_j ||_2 \leq R_{max} \hspace{9mm} \forall j \in [M] \\
& || \mathbf{z}_j - \mathbf{x}_i ||_2 \geq R_{min} \quad \forall i \in [N], \ \forall j \in [M]
\end{aligned}
%\label{eqn:problem-modified}
\end{equation}
\end{problem}
Intuitively, the fact that $t$ is being minimized in conjunction with the first set of constraints will effectively make it take the value of the objective of Problem \ref{prob:problem_delta}.
Problem \ref{prob:epigraph_delta_problem} is still a non-convex optimization problem. 
For this reason, we solve it using an interior point solver, IPOPT \cite{wachter2006implementation}, to which we pass analytic derivatives of constraints of zeroth, first, and second orders. 
Here we leverage the fact that the function 
\begin{equation}
J \rightarrow tr( J^{-1} )
\end{equation}
is a smooth map on the space of positive \textit{definite} matrices (in this case in $\mathbb{R}^{3 \times 3}$).
Furthermore, assuming that $J$ is a smooth, positive definite function of a set of parameters $\theta \in \mathbb{R}^k$, we have the following relations 
\begin{equation}
\nabla_{\theta_i} \ tr(J^{-1}(\theta)) = (-1) tr( J^{-2}(\theta) \frac{\partial J}{\partial \theta_i}(\theta) ),
\end{equation}
and 
\begin{equation}
\begin{aligned}
\nabla_{\theta_i, \theta_j}^2 \ tr(J^{-1}(\theta)) & = \\
 - tr( J^{-2}(\theta) \frac{\partial^2 J}{\partial \theta_i \partial \theta_j}(\theta)) & + 2 tr( J^{-2}(\theta) \frac{\partial J}{\partial \theta_i}(\theta) J^{-1}(\theta) \frac{\partial J}{\partial \theta_i}(\theta)  )
\end{aligned}
\end{equation}

%%%%%%%%%%%%%%%%%%%%%%%%%
\section{Algorithm Analysis}
\label{sec: algorithm-analysis}

\subsection{Price of Smoothing}

We now elaborate on our choice of $\delta > 0$ in Equation \ref{eq:smooth_bearing}.
Indeed, any $\delta > 0$ introduces fictitious information about the range from a $\mathcal{C}$-agent at position $\mathbf{x}$ to a $\mathcal{L}$-agent at position $\mathbf{z}$. 
This follows from the fact that the directional derivative of $\mathbf{h}_{\delta}$ along the bearing vector $\frac{\mathbf{z} - \mathbf{x}}{|| \mathbf{z} - \mathbf{x} ||_2}$ is \textit{non-vanishing} whenever $\delta > 0$. 

However, at a fixed range (i.e. fixed $|| \mathbf{z} - \mathbf{x} ||_2$), the influence of $\delta$ decreases as $\delta \downarrow 0$. 
Intuitively, this implies that at any point in feasible regions of Problems \ref{prob:problem_one} and \ref{prob:problem_delta} (which coincide), the values of the objective functions of the two problems match each other in the limit $\delta \downarrow 0$. 
At this stage two questions remain:
\begin{enumerate}
    \item determine a suitable range (i.e. upper bound) of $\delta$
    \item quantify the suboptimality of solving Problem \ref{prob:problem_delta} in place of Problem \ref{prob:problem_one} as a function of $\delta$.
\end{enumerate}
To answer these questions, we develop a result in the active robot positioning problem that is robust to measurement model mismatch. 
The key result that enables such analysis is the following

\begin{theorem} \label{theorem:theorem_multiplicative}
Consider the setup of Problems \ref{prob:problem_one} and \ref{prob:problem_delta}. 
Let $\eta > 0$ and $\zeta > 1$ be an arbitrary pair of real numbers. 
Define 
\begin{equation}
s_0 = \frac{\eta}{\eta + 1} \frac{R_{min}^2}{M} \sigma_m^2 (1 + \zeta^2).
\label{eqn:allowed-s}
\end{equation}
Then, choosing any $\delta \in (0, R_{min}/\zeta)$, we have the implication
\begin{equation}
\begin{aligned}
\forall \mathbf{x} \in \mathbb{R}^3 \ & : \ tr( \mathcal{J}_{\delta}(\mathbf{x} ; \mathbf{z}_{1:M})^{-1} ) = s \leq s_0  \\
\quad 
& \Rightarrow 
\quad 
\frac{s}{1 + \eta} \leq tr( \mathcal{J}_{0}(\mathbf{x} ; \mathbf{z}_{1:M})^{-1} ) \leq s (1 + \eta).
\end{aligned}
\end{equation}
Furthermore, the same implication holds with the roles of $\mathcal{J}_{\delta}$ and $\mathcal{J}_{0}$ reversed.
\end{theorem}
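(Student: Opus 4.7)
The plan is to reduce the desired trace inequality to a \emph{multiplicative} operator inequality on the information matrices themselves, and then control the latter through a purely geometric estimate on the perturbation introduced by the smoothing parameter $\delta$. Set $c = \eta/(1+\eta) \in (0,1)$. The first step is to verify that establishing the PSD sandwich $(1-c)\,\mathcal{J}_\delta \preceq \mathcal{J}_0 \preceq (1+c)\,\mathcal{J}_\delta$ already suffices: inverting this ordering and taking traces yields $\frac{s}{1+c} \leq tr(\mathcal{J}_0^{-1}) \leq \frac{s}{1-c}$, and substituting $c = \eta/(1+\eta)$ gives $\frac{1}{1-c} = 1+\eta$ together with $\frac{1}{1+c} = \frac{1+\eta}{1+2\eta} \geq \frac{1}{1+\eta}$, which is exactly the two-sided bound stated in the theorem.

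Next I would reduce the PSD sandwich to a single operator-norm estimate. A sufficient condition for $(1-c)\mathcal{J}_\delta \preceq \mathcal{J}_0 \preceq (1+c)\mathcal{J}_\delta$ is $\|\mathcal{J}_\delta - \mathcal{J}_0\|_{\mathrm{op}} \leq c\,\lambda_{\min}(\mathcal{J}_\delta)$. Since all eigenvalues of $\mathcal{J}_\delta^{-1}$ are positive, $\lambda_{\max}(\mathcal{J}_\delta^{-1}) \leq tr(\mathcal{J}_\delta^{-1}) = s$, so $\lambda_{\min}(\mathcal{J}_\delta) \geq 1/s$; invoking the hypothesis $s \leq s_0$, the problem reduces to the purely geometric target $\|\mathcal{J}_\delta - \mathcal{J}_0\|_{\mathrm{op}} \leq \frac{M}{\sigma_m^2 R_{min}^2 (1+\zeta^2)}$.

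For this geometric estimate I would decompose by landmark, $\mathcal{J}_\delta - \mathcal{J}_0 = \sigma_m^{-2} \sum_{j=1}^M \Delta_j$, and compute the per-bearing Jacobians in closed form. Setting $u_j = \mathbf{z}_j - \mathbf{x}$ and $\rho_j = \sqrt{\|u_j\|^2 + \delta^2}$, a direct calculation using $\partial u_j/\partial x = -I$ gives $(\partial_x \mathbf{h})^T \partial_x \mathbf{h} = \|u_j\|^{-2}(I - \hat u_j \hat u_j^T)$ and $(\partial_x \mathbf{h}_\delta)^T \partial_x \mathbf{h}_\delta = \rho_j^{-2}(I - \beta_j \hat u_j \hat u_j^T)$ with $\beta_j = 1 - \delta^4/\rho_j^4$. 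Hence $\Delta_j$ is diagonal in any orthonormal basis whose first vector is $\hat u_j$, with a single positive eigenvalue $\delta^4/\rho_j^6$ along $\hat u_j$ and two equal negative eigenvalues $-\delta^2/(\rho_j^2 \|u_j\|^2)$ orthogonal to it. The ratio of the two magnitudes is $\delta^2 \|u_j\|^2/\rho_j^4$, which is at most $1/\zeta^2 < 1$ whenever $\delta \leq \|u_j\|/\zeta$, so the orthogonal eigenvalue dominates and $\|\Delta_j\|_{\mathrm{op}} = \delta^2/(\rho_j^2 \|u_j\|^2)$.

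The final step is a two-variable maximization of this quantity under the feasibility constraints $\|u_j\| \geq R_{min}$ and $\delta \leq R_{min}/\zeta$. The map $t \mapsto \delta^2/(t(t+\delta^2))$ is strictly decreasing in $t = \|u_j\|^2$, so its maximum over $\|u_j\| \geq R_{min}$ occurs at $\|u_j\| = R_{min}$; the resulting quantity $\delta^2/[R_{min}^2(R_{min}^2 + \delta^2)]$ is strictly increasing in $\delta^2$, so its maximum over $\delta \leq R_{min}/\zeta$ is attained at $\delta = R_{min}/\zeta$ and equals $1/[R_{min}^2(1+\zeta^2)]$. Summing over $j$ and dividing by $\sigma_m^2$ delivers exactly the bound targeted at the end of the second paragraph, which closes the implication. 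The reverse direction (hypothesis on $tr(\mathcal{J}_0^{-1})$ instead of $tr(\mathcal{J}_\delta^{-1})$) is identical, since $\|\mathcal{J}_\delta - \mathcal{J}_0\|_{\mathrm{op}}$ is symmetric in the two matrices and the spectral lower bound $\lambda_{\min}(\mathcal{J}_0) \geq 1/s$ uses only $tr(\mathcal{J}_0^{-1}) = s$. I expect the main obstacle to be this final two-stage optimization: matching the $(1+\zeta^2)$ factor inside $s_0$ requires keeping the tighter bound $\rho_j^2 \geq \|u_j\|^2 + \delta^2$ throughout rather than relaxing to $\rho_j^2 \geq \|u_j\|^2$, and misplacing this step would only yield a weaker threshold proportional to $\zeta^2$.
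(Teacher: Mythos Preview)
Your proof is correct and arrives at the same operator-norm estimate as the paper (their Claim~2), but the passage from that estimate to the trace inequality is genuinely different. The paper first invokes Weyl's inequality to turn the bound $\|\mathcal{J}_\delta - \mathcal{J}_0\|_2 \leq \frac{M\sigma_m^{-2}}{R_{\min}^2(1+\zeta^2)}$ into a uniform additive perturbation $|\lambda_k^\delta - \lambda_k^0| \leq \delta'$ on the individual eigenvalues, and then proves a separate scalar lemma (Claim~1): if positive reals satisfy $|\tilde\lambda_k - \lambda_k| \leq \delta'$ and $\sum \lambda_k^{-1} = s \leq \frac{\eta}{\eta+1}\,(\delta')^{-1}$, then $\sum \tilde\lambda_k^{-1}$ lies in $[s/(1+\eta),\,s(1+\eta)]$. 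That lemma is established by recasting the extremal sums as convex/concave maximizations over a simplex and reading off the value at a vertex. You bypass both Weyl and this optimization argument by going straight to the multiplicative sandwich $(1-c)\mathcal{J}_\delta \preceq \mathcal{J}_0 \preceq (1+c)\mathcal{J}_\delta$ via $\lambda_{\min}(\mathcal{J}_\delta) \geq 1/s$, then inverting in the Loewner order. Your route is shorter and avoids the auxiliary eigenvalue machinery; the paper's route isolates Claim~1 as a standalone perturbation lemma that could be reused for other scalar summaries of the spectrum. Both yield the identical intermediate lower bound $\frac{(1+\eta)s}{1+2\eta}$ before relaxing to $s/(1+\eta)$, so neither is quantitatively sharper.
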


\begin{proof}
See Appendix in Section \ref{sec:appendix_a}.
\end{proof}

\begin{corollary} \label{cor:guarantee_corollary}
Within the setup of Theorem \ref{theorem:theorem_multiplicative}, choose any $\delta \in (0, R_{min}/\zeta)$.
Denote the values of Problems \ref{prob:problem_one} and \ref{prob:problem_delta} by $V^{0}$ and $V^{\delta}$, respectively. 
Suppose the sublevel set 
\begin{equation}
\mathcal{S} := \{ \mathbf{z}_{1:M} \ | \ V^{\delta}(\mathbf{z}_{1:M}) \leq s_0 \}
\end{equation}
is not empty. 
Then the optimal solution of Problem \ref{prob:problem_delta} is at most a $(1 + \eta)^2$-suboptimal solution of Problem \ref{prob:problem_one}.
\end{corollary}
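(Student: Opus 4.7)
The plan is to chain two multiplicative estimates produced by Theorem \ref{theorem:theorem_multiplicative} and collect a total factor of $(1+\eta)^2$. Let $\mathbf{z}^\star_0$ and $\mathbf{z}^\star_\delta$ denote optimizers of Problems \ref{prob:problem_one} and \ref{prob:problem_delta}, respectively, so that $V^0 = V^0(\mathbf{z}^\star_0)$ and $V^\delta = V^\delta(\mathbf{z}^\star_\delta)$, with the overload that $V^0(\cdot)$ and $V^\delta(\cdot)$ denote the objective functions of the two problems. A key observation is that both problems share the same feasible set -- only their objectives differ -- so either optimizer can be evaluated under either objective. Moreover, the hypothesis that $\mathcal{S}$ is nonempty together with the optimality of $\mathbf{z}^\star_\delta$ gives the sublevel bound $V^\delta \leq s_0$, which is precisely what activates Theorem \ref{theorem:theorem_multiplicative}.

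The first estimate bounds the inflation incurred by evaluating $\mathbf{z}^\star_\delta$ under the true objective. For every $i \in [N]$, the per-setpoint trace $tr(\Sigma_\delta(\mathbf{x}_i; \mathbf{z}^\star_\delta))$ is dominated by $V^\delta \leq s_0$, so Theorem \ref{theorem:theorem_multiplicative} (in the direction $\delta \to 0$) yields $tr(\Sigma_0(\mathbf{x}_i; \mathbf{z}^\star_\delta)) \leq (1+\eta)\, tr(\Sigma_\delta(\mathbf{x}_i; \mathbf{z}^\star_\delta))$, and taking the max over $i$ gives $V^0(\mathbf{z}^\star_\delta) \leq (1+\eta)\, V^\delta$. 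The second estimate asserts that $V^\delta \leq (1+\eta)\, V^0$, which I would prove by casework. If $V^0 \leq s_0$, each $tr(\Sigma_0(\mathbf{x}_i; \mathbf{z}^\star_0))$ is at most $s_0$, so the reverse direction of Theorem \ref{theorem:theorem_multiplicative} applied pointwise in $i$ at $\mathbf{z}^\star_0$, followed by a max over $i$, gives $V^\delta(\mathbf{z}^\star_0) \leq (1+\eta)\, V^0$; feasibility of $\mathbf{z}^\star_0$ in Problem \ref{prob:problem_delta} then yields $V^\delta \leq V^\delta(\mathbf{z}^\star_0) \leq (1+\eta)\, V^0$. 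If instead $V^0 > s_0$, then trivially $V^\delta \leq s_0 < V^0 \leq (1+\eta)\, V^0$. Multiplying the two estimates yields $V^0(\mathbf{z}^\star_\delta) \leq (1+\eta)^2\, V^0$, which is the desired suboptimality bound.

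The only delicate point is the second estimate: Theorem \ref{theorem:theorem_multiplicative} needs a sublevel hypothesis to activate, and the nonemptiness of $\mathcal{S}$ only supplies this directly for $\mathbf{z}^\star_\delta$, not for $\mathbf{z}^\star_0$. Splitting on whether $V^0 \leq s_0$ cleanly resolves this. Everything else is routine -- in particular, commuting the uniform pointwise bound $(1+\eta)$ with the maximum over $i$ requires no care since the constant does not depend on $i$ -- and there is no need to worry about feasibility transfer between the two problems because their constraint sets coincide.
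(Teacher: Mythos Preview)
Your proof is correct and follows essentially the same approach as the paper's: both arguments establish the two multiplicative estimates $V^0(\mathbf{z}^\star_\delta) \leq (1+\eta) V^\delta$ and $V^\delta \leq (1+\eta) V^0$, perform the same case split on whether $V^0 \leq s_0$ to handle the activation hypothesis of Theorem \ref{theorem:theorem_multiplicative} at $\mathbf{z}^\star_0$, and chain them for the $(1+\eta)^2$ factor. The only cosmetic difference is that the paper first packages the ``apply Theorem \ref{theorem:theorem_multiplicative} pointwise in $i$ and take the max'' step into a separate claim about $V^\delta(\mathbf{z}_{1:M})$ versus $V^0(\mathbf{z}_{1:M})$, whereas you do this inline.
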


\begin{proof}
See Appendix in Section \ref{sec:appendix_a}.
\end{proof}

The significance of Corollary \ref{cor:guarantee_corollary}, is that if there exists a solution to the Problem \ref{prob:problem_delta} with a sufficiently low objective value, then that solution is also approximately optimal for the original problem. 
To choose an appropriate value of $\zeta$, we set $s_0$ to be the square of the largest $\sigma$-uncertainty ball of each of the $\mathcal{C}$-agents that we are willing to tolerate. 
Naturally, this quantity can depend on the scale of the environment, and its density of obstacles. Here we call it $R_{tol}$. 
We then set
\begin{equation}
R_{tol}^2 \leq \frac{\eta}{\eta + 1} \frac{R_{min}^2}{M} \sigma_m^2 (1 + \zeta^2), 
\end{equation}
and so it is enough to choose
\begin{equation}
\zeta^2 = \left( \frac{R_{tol}}{R_{min}} \right)^2 M \sigma_m^{-2} \frac{2}{\eta}.
\end{equation}
Some ``physically reasonable'' values in the expression above involve setting
\begin{equation}
\frac{R_{tol}}{R_{min}} \rightarrow 10, \ M \rightarrow 10, \sigma_m \rightarrow 10^{-2}, \eta \rightarrow 10^{-1}
\end{equation}
thus leading to
\begin{equation}
\zeta^2 = 2 \times 10^9.
\end{equation}
In practice, we cap $\zeta$ to $10^{5}$ out of concern for numerical precision.

\subsection{Computational Complexity}

The computational complexity of the algorithm is determined by two factors. 
Firstly, it depends on the complexity of computing derivatives of the desired order.
In our case, all derivatives of order up to two are computed in $O(M^2N)$ time.  
Second, both the number of iterations, as well as the effort of computing gradient descent steps within the IPM influence the running time of the algorithm. 
The total number of variables and constraints is $O(MN)$, and so, roughly speaking, every gradient descent step is computed by inverting a square matrix of dimension $O(MN)$. 
For dense matrices, this is done in time at most $O((MN)^3)$. 
However, IPOPT can exploit the sparsity pattern of the constraints to solve such systems more efficiently. Nevertheless, we leave more refined (favorable) estimates of running time for future work. 
Assuming the total number of iterations is $K$, the running time of the algorithm is of order at most $O(K(MN)^3)$. 

\subsection{Limitations}
\label{sec:algorithm-limitations}

Despite encouraging performance on random data (see Table \ref{table:size-accuracy}), our numerical algorithm is not perfect. 
Firstly, as a local-optimization-based method, it can get stuck in suboptimal local minima, such as in scenarios shown in Fig. \ref{fig:landscape}.
Second, our algorithm is a centralized solution. 
Although it empirically scales favourably with the size of robot teams we consider, we note that the current method cannot handle scenarios with an a priori unknown number of spatially distributed $\mathcal{C}$-agents. 
We leave addressing these limitations for future work.

\begin{figure*}[tp]%
\centering
\hspace{-15mm}
\begin{subfigure}{.4\columnwidth}
\includegraphics[scale=0.3, trim={15cm 4.2cm 15cm 0}, clip]{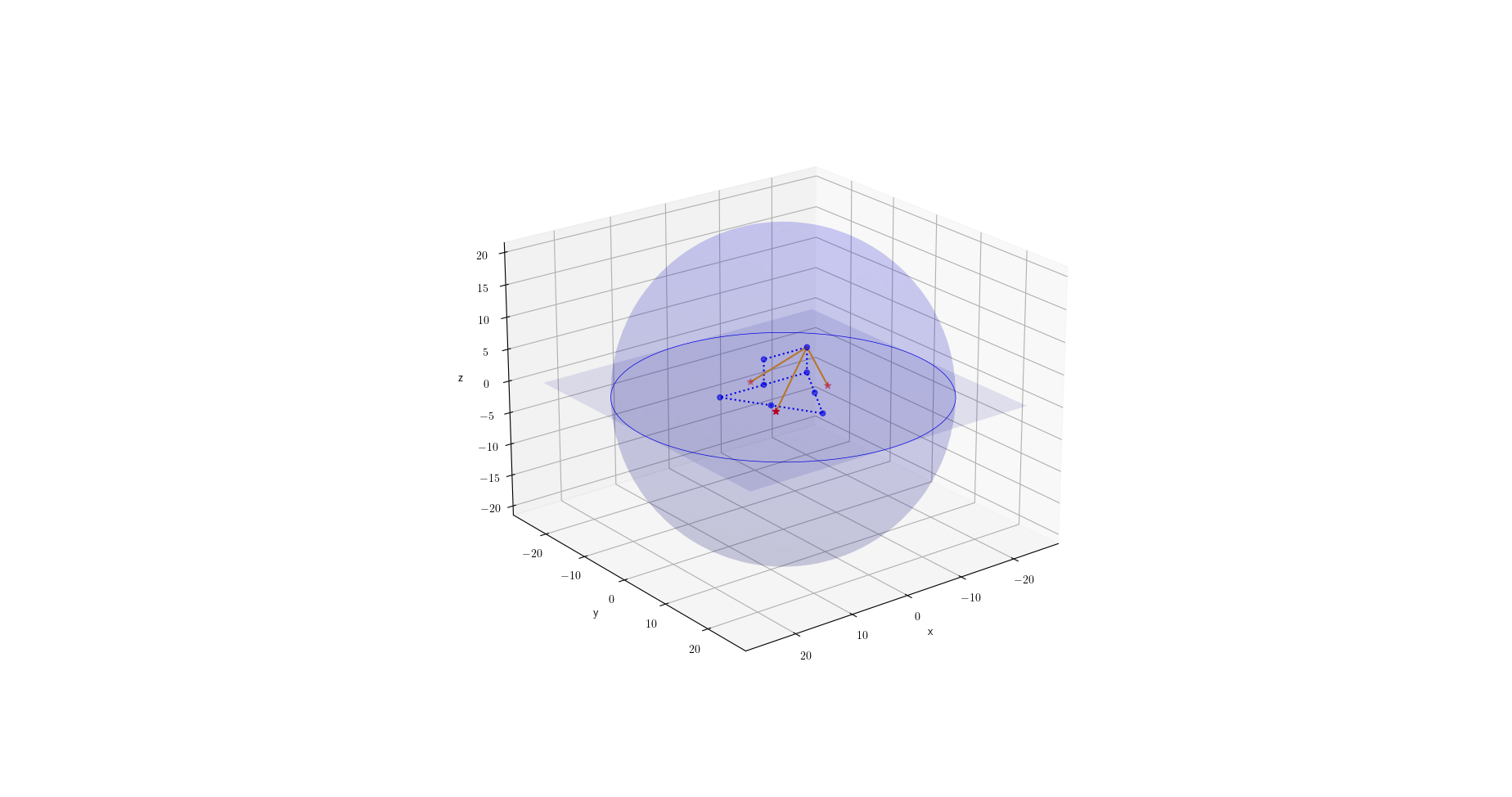}%
\caption{Eight $\mathcal{C}$-agents flying in formation.}%
\label{subfiga}%
\end{subfigure}\hspace{25mm}%
\begin{subfigure}{.4\columnwidth}
\includegraphics[scale = 0.3, trim={15cm 4.2cm 15cm 0}, clip]{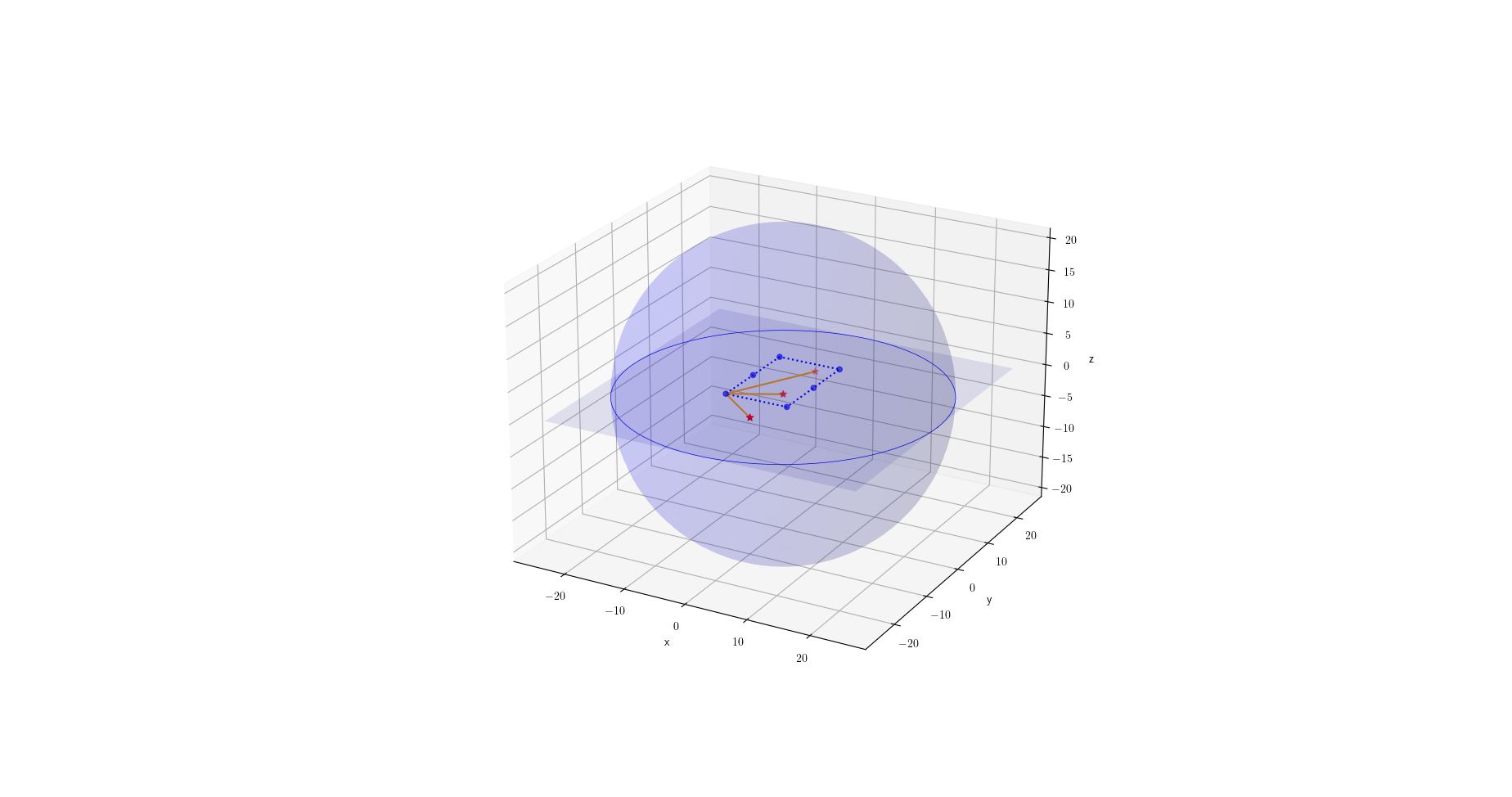}%
\caption{Six $\mathcal{C}$-agents on two straight line paths.}%
\label{subfigb}%
\end{subfigure}\hspace{25mm}%
\begin{subfigure}{.4\columnwidth}
\includegraphics[scale=0.3, trim={15cm 5cm 15cm 0}, clip]{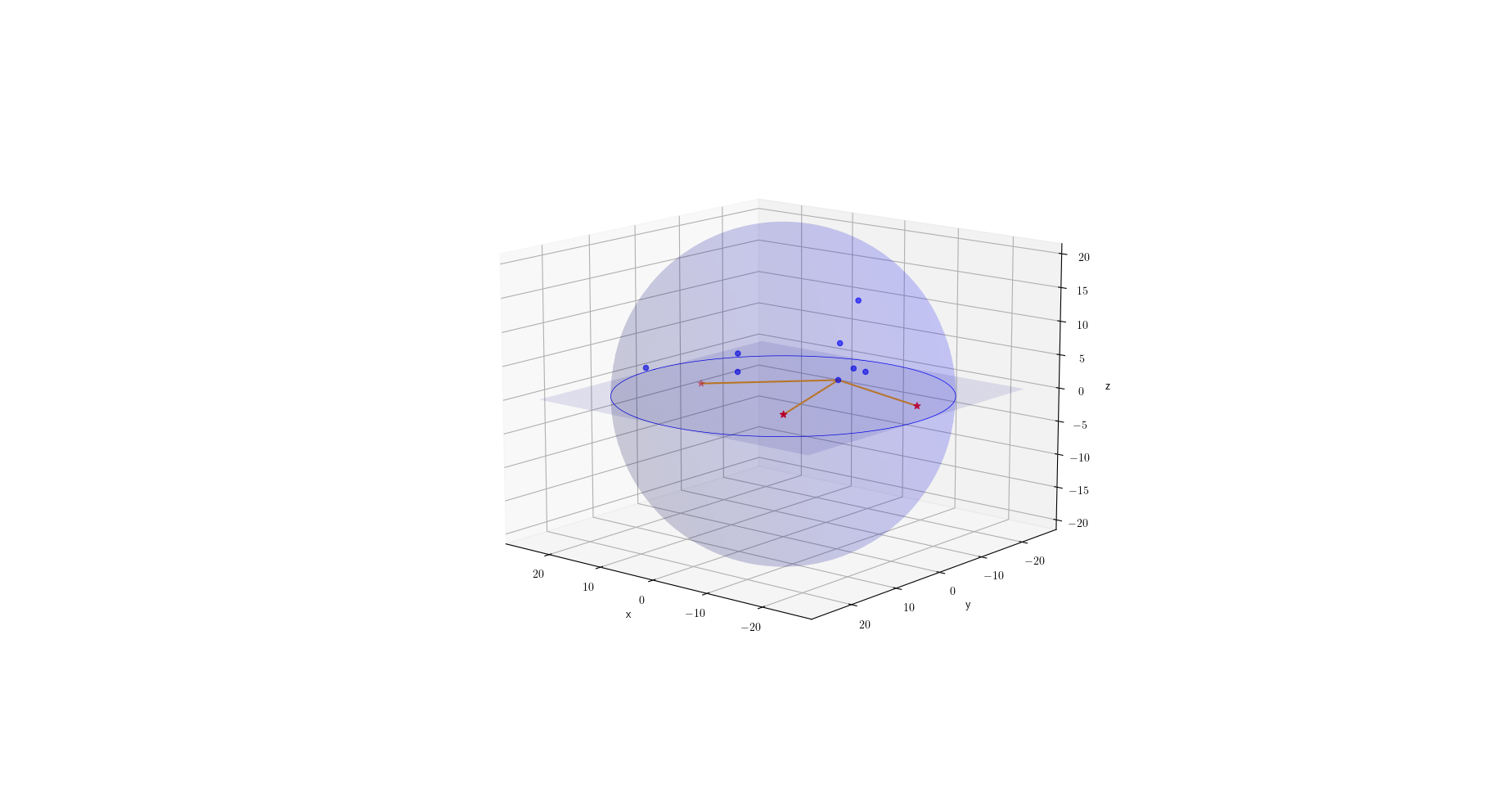}%
\caption{Eight $\mathcal{C}$-agents randomly positioned inside the upper hemisphere.}%
\label{subfigc}%
\end{subfigure}%
\caption{Examples of $\mathcal{L}$-agents positioned by our method. Blue dots are centered on each $\mathcal{C}$-agent/setpoint, while the red stars denote $\mathcal{L}$-agents. Orange lines show bearings from one of the $\mathcal{C}$-agents to all $\mathcal{L}$-agents.}
\label{figabc}
\end{figure*}

\section{Numerical Analysis}
\label{sec:numerical-analysis}

\begin{table*}[t!]
\centering
\begin{tabular}{ |c||c|c|c|c|c|c|c|c|c| }
\hline
$N$ & \multicolumn{3}{|c|}{$5$} & \multicolumn{3}{|c|}{$10$} & \multicolumn{3}{|c|}{$20$} \\
\hline
$M$ & $2$ & $5$ & $10$ & $2$ & $5$ & $10$ & $2$ & $5$ & $10$\\ 
\hline 
\hline
ours & $-0.002$ & $-0.01$ & $-0.02$ & $-0.001$ & $-0.01$ & $-0.03$ & $-0.002$ & $-0.01$ & $-0.03$\\ 
\hline 
greedy & $0.25$ & $0.48$ & $0.54$ & $0.23$ & $0.48$ & $0.52$ & $0.25$ & $0.37$ & $0.41$\\ 
\hline
\end{tabular}
\caption{The suboptimality of the approaches for various team sizes. We measure the suboptimality (loss of accuracy) of a solution in terms of its relative value compared to the objective achieved by the evolutionary optimization scheme.}
\label{table:size-accuracy}
\end{table*}

In this section, we test the numerical performance of our algorithm. 
We compare it against two baseline approaches, one a greedy approximate algorithm, and the other a heuristic evolutionary procedure for global optimization \cite{globalOptimizationAlgo} as implemented in the package NLOPT \cite{nlopt}. 
First, we look at the morphology of solutions generated by our algorithm, and then turn to comparing its accuracy and time efficiency compared to the other two methods. 

%\subsection{Morphology of Optimal Placements}

We begin by looking at the form of solutions of several robot team problems with the following parameters. 
The prior covariance is set to $\Sigma \rightarrow 30 \times I_3$, and the measurement standard deviation $\sigma_m \rightarrow 0.1$. 
We consider three problems, each involving three $\mathcal{L}$-agents, but varying numbers and configurations of $\mathcal{C}$-agents. 
Some characteristic solutions appear in \cref{figabc}. 

\subsection{Accuracy}

Here we show the comparison of the objective values achieved by our algorithm and the two alternate approaches on a range of different problems, differing in the number of agents, and the amount of noise in the sensor measurements. 

First, we set the noise to a set value, namely $\Sigma \rightarrow 30 \times I_3$ and $\sigma_m \rightarrow 0.1$, and vary the size of the team. 
In particular, we look at the performance of the algorithms as the number of $\mathcal{C}$-agents (denoted by $N$) and $\mathcal{L}$-agents (denoted by $M$) varies according to $N \in \{5, 10, 20\}$ and $M \in \{2, 5, 10\}$.
For every $(N, M)$ pair, we evaluate each algorithm on ten different random configurations of $\mathcal{C}$-agents, sampled from a sphere of radius $40$ m, and perform ten independent algorithm runs for each instance. Table \ref{table:size-accuracy} shows that the second-order method attains objectives matching those of the evolutionary algorithm on random data, and consistently outperforms the greedy approach - in some cases by as much as $50\%$.

Second, we test the accuracy of the algorithm on a problem with $N = 10$ $\mathcal{C}$-agents and $M = 5$ $\mathcal{L}$-agents, and varying amounts of sensing noise. 
For each level of noise, we run each algorithm on ten different random instances, generated as before. 
Table \ref{table:noise-accuracy} shows that our algorithm produces solutions that match the quality of those output by the evolutionary optimization scheme. 
Similarly as before, our algorithm consistently outperforms the greedy baseline, sometimes by as much as $50\%$. 

\begin{table}
\centering
\begin{tabular}{ |m{2.2em}||c|m{2.5em}|m{2.5em}|m{2.6em}|m{2.6em}|m{2.6em}| }
\hline 
$\sigma_m$ & $1$ & $2^{-1}$ & $2^{-2}$ & $2^{-3}$ & $2^{-4}$ & $2^{-5}$\\
\hline
\hline
ours & $-0.0002$ & $-0.003$ & $-0.003$ & $-0.013$ & $-0.025$ & $-0.043$\\
\hline 
greedy & $0.039$ & $0.116$ & $0.272$ & $0.417$ & $0.581$ & $0.624$\\
\hline
\end{tabular}
\caption{Suboptimality of our algorithm and the greedy algorithm with respect to the solution produced by the evolutionary algorithm.}
\label{table:noise-accuracy}
\end{table}

\subsection{Compute Speed}

Here we show the timings of the three procedures on a problem involving ten $\mathcal{C}$-agents and five $\mathcal{L}$-agents.
Table \ref{table:noise-accuracy} shows that our approach produces solutions that match the solution of the evolutionary algorithm to two significant digits at a fraction of the running time, as illustrated in Table \ref{compute_speed}. 

\begin{table}[h!]
\centering
\begin{tabular}{ |c||c|c|c| }
\hline 
$M$ & $2$ & $5$ & $10$\\
\hline
\hline
ours & $0.079$ & $0.108$ & $0.218$ \\
\hline 
greedy & $1.072$ & $2.605$ & $4.881$\\
\hline
global & $0.663$ & $4.970$ & $30.029$\\
\hline 
\end{tabular}
\caption{Computation time (in seconds) of our algorithm and the two alternate approaches. }
\label{compute_speed}
\end{table}

\section{Experiments}
\label{sec:experiments}

The setup for our simulation and real-world experiments involves a team of multiple UGVs and UAVs. To analyze the influence of the degree of structure of the environment and test the robustness of the system, we performed experiments across three types of environments, i.e., structured, semi-structured, and unstructured. We keep the same mission specification for real-world and simulated experiments in order to analyze the influence of real-world noise in controlled settings. Concretely, the UAV's waypoints are identical in real-world and simulation experiments. We identify the concept of waypoints and setpoints from \cref{sec:problem-form}. Instead of executing these waypoints once, the UAV is commanded to execute multiple rounds so that we can evaluate the robustness of our method for long-range missions.

The UGVs serve the role of $\mathcal{L}$-agents, and the UAV that of a $\mathcal{C}$-agent. The UGVs move along the ground plane at all times. 
To ensure the robustness of the state estimation module for our team of agents, we add a further constraint stipulating that at each setpoint, there must exist one yaw angle from which the UAV can view \textit{all} UGVs. Introducing additional variables, the unit heading vectors $\mathbf{n}_1, \mathbf{n}_2, ..., \mathbf{n}_N \in S^1$ along such yaw angles, the constraint-augmented problem becomes:

\begin{equation}
\begin{aligned}
\min_{\mathbf{z}_{1:M} \in (\mathbb{R}^2)^M, \ \mathbf{n}_{1:N} \in (S^1)^N} \ & \max_{i \in [N]} \ tr( \ \Sigma( \mathbf{x}_i; \ \mathbf{z}_{1:M} ) \ ) \\
s.t. & \\
& || \mathbf{z}_j ||_2 \leq R_{max} \hspace{9mm} \forall j \in [M] \\
& || \mathbf{z}_j - \mathbf{x}_i ||_2 \geq R_{min} \quad \forall i \in [N], \ \forall j \in [M] \\
& \angle ( \mathbf{n}_i, \ \mathbf{z}_j - \mathbf{x}_i ) \leq \frac{\alpha}{2} \quad \forall i \in [N], \ \forall j \in [M]. 
\end{aligned}
\end{equation}

The solution proceeds similarly to before. 
The addition of the aforementioned set of constraints translates into
\begin{equation}
\mathbf{n}_i \cdot (\mathbf{z}_j - \mathbf{x}_i) \geq \cos \left( \frac{\alpha}{2} \right) \sqrt{ || \mathbf{z}_j - \mathbf{x}_i ||2^2 + \delta^2 },
\end{equation}
which is a smooth approximation that becomes ideal in the limit as $\delta \downarrow 0$. 
Larger $\delta$ provides more conservative approximations of the field of view.

The rest of this section is organized as follows. The first subsection provides an overview of the software-hardware system used for our experiments. The second subsection presents simulation experiments, followed by real-world experiments.

\subsection{System overview}

\begin{figure}[t!]
% \vspace{-0.15in}
\centering
\includegraphics[trim=0 0 0 0, clip,width=1.0\columnwidth]{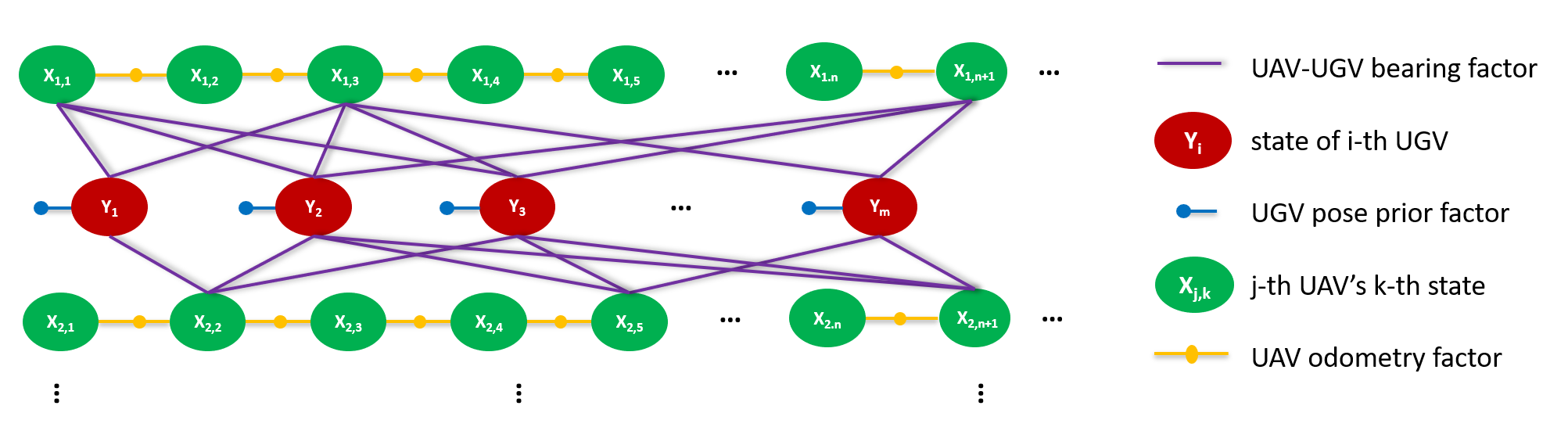}
% \vspace{-0.2in}
    \caption{\textbf{Active collaborative team localization factor graph representation.} In our experiments, since the UGVs are static, each UGV is assigned exactly one state in the factor graph.}
    \label{fig:atl-factor-graph}
% \vspace{-0.1in}
\end{figure}

\textbf{System overview (hardware):} In real-world experiments, our system consists of multiple UGVs and a UAV as shown in \cref{fig:robots}. The UGVs are equipped with ouster OS1-64 lidars. The UAV platform is equipped with a hardware-synchronized collection of greyscale stereo cameras, an IMU, and an RGB monocular camera. The greyscale stereo cameras and the IMU are used for stereo VIO \cite{sun2018robust}. In addition, the UAV platform also has an ouster OS1-64 lidar, but it is only used for obstacle avoidance; the UAV's localization relies only on cameras and the IMU. Our UGV and UAV platforms are capable of autonomous navigation in cluttered and GPS-denied environments using only onboard sensing and computation. Note that only the monocular RGB camera is used for detecting UGVs and generating bearing measurements. 

We carry out two sets of simulation experiments. In the first set, we have 1 UAV with 2 UGVs and 1 UAV with 3 UGVs. This allows us to study how much performance gain we can obtain by adding more UGVs into the environment. In the second set, we have 10 UAVs with 10 UGVs, which allows us to analyze how well our method can generalize to larger robot teams. To simulate real-world perception, we add noise to the UAV's camera measurements and odometry, as well as introducing dynamic objects in the environment.

%%%%%%%%%%%%%%%%%%%%%%%%%%%%%%%%%%%%%%%%%
\begin{figure}[t!]
\centering
\begin{subfigure}{\columnwidth}
\includegraphics[trim={0 0 0 75}, clip, width=\columnwidth, height = 0.75in]{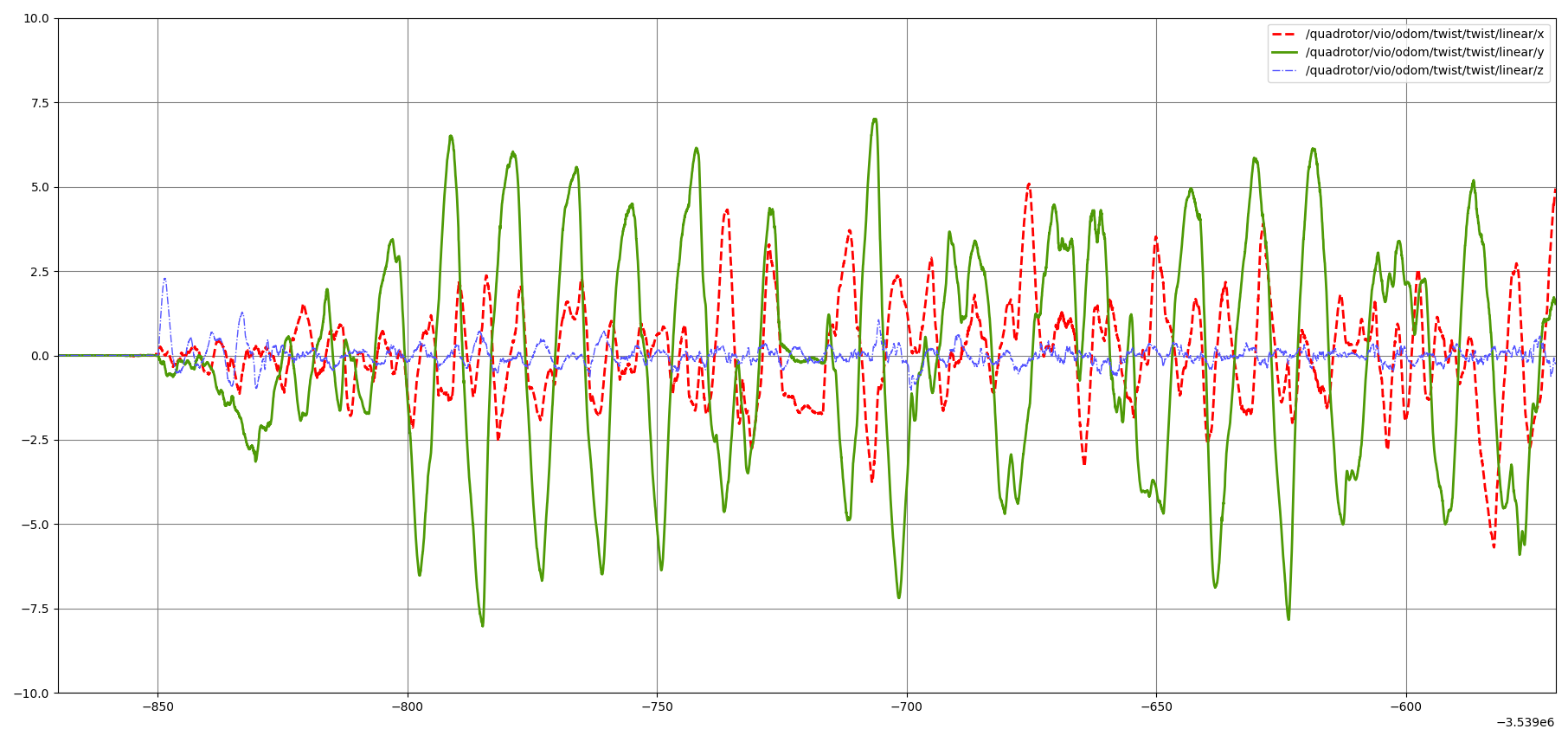}
\end{subfigure}

\begin{subfigure}{\columnwidth} 
\includegraphics[trim={0 0 0 75}, clip, width=\columnwidth, height = 0.75in]{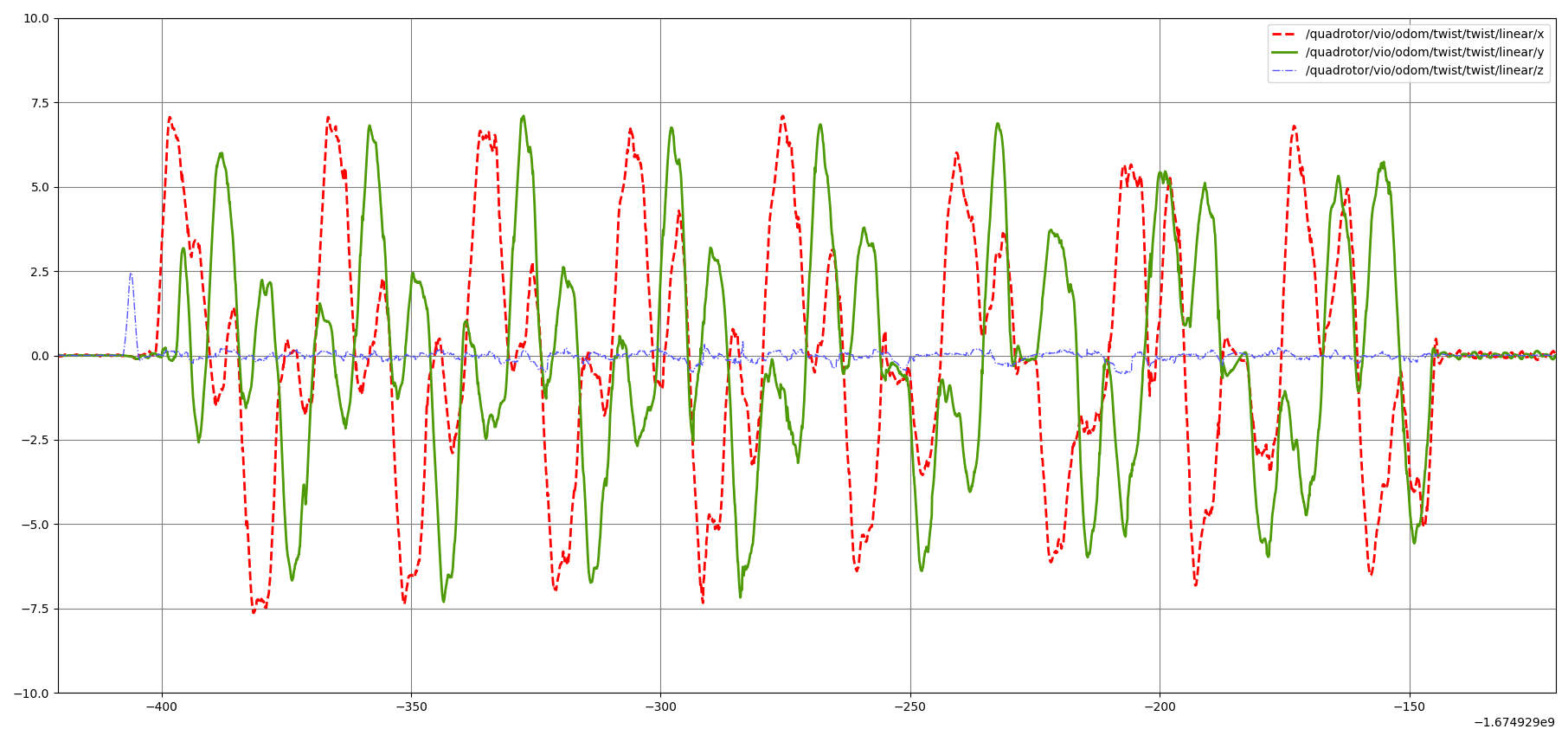}
\end{subfigure}
\caption{\textbf{Velocity profiles} for real-world experiments (top) and simulated experiments (bottom) illustrating aggressive maneuvers in experimentation and simulation. The curves show the velocities along X (dotted red), Y (solid green), and Z (dash-dotted blue) axes for flights in the unstructured environment. The vertical axis is the velocity, which ranges from -10 m/s to +10 m/s. The horizontal axis is the timestamp in seconds. The UAV accelerates drastically and frequently. Such aggressive motions pose significant challenges to state estimation.}
\label{fig:velocity-profile}
\end{figure}
%%%%%%%%%%%%%%%%%%%%%%%%%%%%%%%%%%%%%%%%%

\begin{figure*}[h!]
\begin{center} 
        % \vspace{-0.1in}
 \setlength%\extrarowheight{5pt}
 \resizebox{1.0\textwidth}{!}
 {\begin{tabular}{||c | c | l | l | l ||} 
 \hline
 Environment / Traj. Length & Drift Red. & Ours Drift (X/Y/Z) (m) & Rand. UGVs Drift (X/Y/Z) (m)  & VIO Drift (X/Y/Z) (m) \\ 
  \hline \hline
Structured (2 UGVs) / 1.4 km & \textbf{86.29\%} &  \textbf{2.92} ( -1.21  /  -1.61  /  +2.11 ) & 4.16 ( -3.70  /  -0.33  /  +1.87 ) & 21.27 ( -12.74  /  -8.85  /  +14.56 ) \\
\hline
Structured (3 UGVs) / 1.4 km & \textbf{96.19\%} &  \textbf{0.81} ( -0.77  /  +0.20  /  +0.16 ) & 2.84 ( -1.36  /  -2.47  /  +0.30 ) &  21.27 ( -12.74  /  -8.85  /  +14.56 ) \\ 
  \hline  \hline
 Semi-Structured (2 UGVs) / 1.0 km &  \textbf{86.40\%} &  \textbf{3.21} ( -0.01  /  -2.73  /  -1.68 ) & 15.00 ( -9.38  /  +11.69  /  +0.70 ) &  23.57 ( -21.85  /  -6.29  /  +6.22 )  \\ 
 \hline
 Semi-Structured (3 UGVs) / 1.0 km & \textbf{88.38\%} &  \textbf{2.74} ( +0.15  /  -1.69  /  -2.15 ) & 8.01 ( -6.20  /  +5.04  /  +0.59 ) &  23.57 ( -21.85  /  -6.29  /  +6.22 ) \\ 
  \hline \hline
Unstructured (2 UGVs) / 1.4 km & \textbf{60.64\%} &  \textbf{7.50}  ( -2.77  /  +6.95  /  -0.51 ) & 16.04 ( -2.45  /  +15.85  /  -0.37 ) &  19.05 ( -1.15  /  +19.01  /  +0.56 ) \\ 
\hline
Unstructured (3 UGVs) / 1.4 km & \textbf{74.40\%} &  \textbf{4.88} ( -1.77  /  +4.10  /  -1.96 ) & 11.53 ( -2.91  /  +11.12  /  -0.94 ) &  19.05 ( -1.15  /  +19.01  /  +0.56 ) \\ 
  \hline \hline
\end{tabular}}
\end{center}
        \vspace{-0.08in}
\caption{{Quantitative results for Unity-based photo-realistic simulation experiments.} The performance improvement achieved by detecting and using UGVs as bearing constraints (3rd column) over using VIO alone (5th column) is shown in the 2nd column. All flight missions are autonomous with aggressive motions and a maximum design velocity of 13 m/s, as shown in the example in \cref{fig:velocity-profile}. The average drift reduction is 77.74\% with 2 UGVs and 86.32\% with 3 UGVs across all environments. Random positioning of UGVs produces much worse results (4th column) than our method. Therefore, the proposed method is superior to using VIO alone as well as randomly positioning UGVs.} 
\label{fig:quant-result-table-sim-3-robot}
\end{figure*}

\begin{figure*}[t!]
\begin{subfigure}{0.73\textwidth}
\begin{center} 
        % \vspace{-0.1in}
 \setlength%\extrarowheight{5pt}
 \resizebox{1.0\textwidth}{!}
 {\begin{tabular}{||c | c | c| c ||} 
 \hline
UAV Index / Traj. Length  & Drift Red. & Ours Drift (X/Y/Z) (m)  & VIO Drift (X/Y/Z) (m)  \\ 
  \hline \hline
1st / 0.3087 km  & \textbf{75.35\%} &  {1.31 ( +1.00  /  -0.83  /  +0.12 )} &  {5.29 ( +0.46  /  -5.25  /  -0.51 ))}\\
\hline
2nd / 0.3835 km & \textbf{80.88\%} &  {1.80 ( -1.57  /  +0.74  /  -0.46 )}  &  {9.39 ( -8.44  /  +3.99  /  -1.00 )}\\ 
  \hline  
 3rd / 0.9042 km &  \textbf{62.68\%} &  {4.91 ( +0.50  /  -4.88  /  -0.19 )} &  13.15 ( -2.40  /  -12.33  /  -3.90 )  \\ 
 \hline
 4th / 0.1232 km & \textbf{57.73\%} &  {0.82 ( -0.30  /  -0.01  /  +0.76 )}  &  {7.30 ( +1.47  /  -7.13  /  +0.51)} \\ 
  \hline
5th / 0.3353 km & \textbf{60.04\%} &  {2.77 ( -2.67  /  -0.47  /  -0.58 )}  &  {6.94 ( -5.67  /  -3.83  /  -1.15 )}\\ 
\hline
6th / 0.1947 km & \textbf{68.22\%} &  {2.98 ( +1.05  /  -2.78  /  -0.18 )} & {9.37 ( -5.17  /  -7.73  /  -1.12 )} \\ 
\hline
7th / 0.5200 km & \textbf{81.69\%} &  {1.60 ( -1.12  /  +1.11  /  -0.26 )} &  {8.73 ( -6.21  /  +6.08  /  +0.80 )} \\ 
\hline
8th / 0.3310 km & \textbf{62.98\%} &  {2.73 ( -1.24  /  +2.39  /  +0.43 )} &  {7.37 ( -6.04  /  +4.15  /  +0.74 )}\\ 
\hline
9th / 0.5872 km  & \textbf{79.10\%} &  {2.59 ( -1.69  /  -1.82  /  -0.73 )} &  {12.39 ( +1.17  /  -12.07  /  +2.52 )} \\ 
\hline
10th / 0.4848 km  & \textbf{86.61\%} &  {1.28 ( -0.60  /  -1.13  /  +0.05 )} &  {9.56 ( +3.54  /  -8.81  /  -1.13 )} \\  
\hline
\end{tabular}}

\vspace{0.1in}

 \resizebox{1.0\textwidth}{!}
 {\begin{tabular}{||c | c | c| c | c | c ||} 
 \hline
 UAVs & Drift Red. Mean  & Drift Red. Std. Dev. & Traj. Length Median & Traj. Length Sum \\ 
  \hline \hline
1st-10th &  71.50\% &  9.85\% & 0.3594 km & 4.1726 km\\
\hline
\end{tabular}}
\end{center}

\end{subfigure}
\hspace{0.1in}
\begin{subfigure}{0.235\textwidth}
\centering
\includegraphics[trim=10 23 0 0, clip,width=1.0\columnwidth]{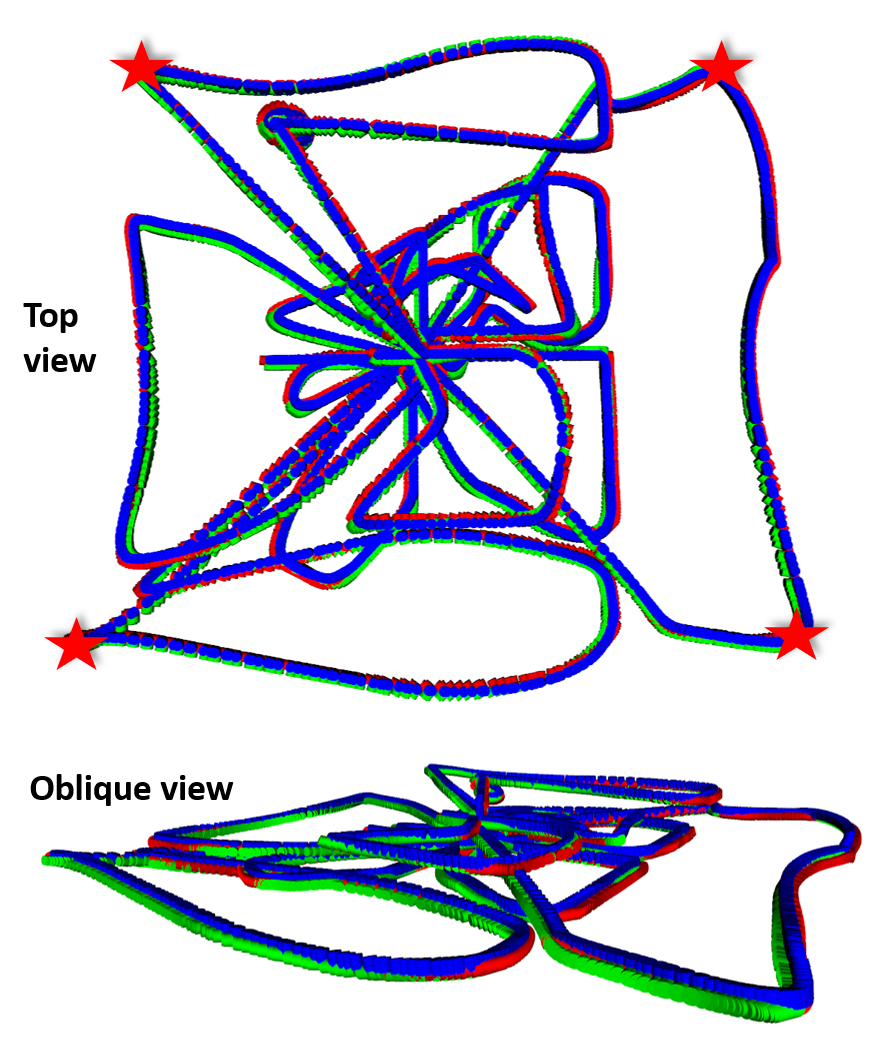}
    % \vspace{-0.2in}
\end{subfigure}
\caption{\textbf{Left: Quantitative results for simulation experiments with 10 UAVs and 10 UGVs in the unstructured environment.} The average drift correction across all UAVs is 71.50\%. The median trajectory length for each UAV is 359.4 m, which is significantly shorter than trajectories in \cref{fig:quant-result-table-sim-3-robot}. This leads to a smaller VIO drift and thus less room for drift reduction. The sum of all UAVs' trajectories is 4.2 km. \textbf{Right: Trajectories for all 10 UAVs.} Each UAV operates at a slightly different altitude and in a different part of the environment. Red stars show the corners of the environment.}
\label{fig:quant-result-table-sim-10-robot}
\end{figure*}

%%%%%%%%%%%%%%%%%%%%%%%%%%%%%%%%%%%%%%%%%%%%%%%%%%%%%%%%%%%%%%%%%%%%%%%%%%

\textbf{System overview (estimation pipeline):} Our estimation pipeline builds upon a factor-graph-based method \cite{kaess2012isam2, dellaert2012factorgtsam}, which takes in VIO estimates and bearing measurements and outputs the UAVs' SE3 pose estimates. The choice of a factor graph approach over an EKF for the experiments is due to its better accuracy and consistency.  
The structure of our factor graph is shown in \cref{fig:atl-factor-graph}, which consists of two kinds of nodes (UAV and UGV poses) and three kinds of factors (UAV odometry factors, UAV-UGV bearing factors, and UGV pose prior factors). For the UAV odometry factors, we use the stereo-MSCKF algorithm \cite{sun2018robust} to estimate the relative transformation between two consecutive key poses, i.e., $\mathbf{p}^{vio}_{t} \ominus \mathbf{p}^{vio}_{t-1}$.

The generation of bearing factors is different between real-world and simulation experiments. For simulation experiments, the bearing measurements are generated by first calculating the relative bearing based on the ground-truth UGV and UAV poses and then adding Gaussian noise. For real-world experiments, the bearing factors are based on UGV detections. Since object detection is not the focus of our work, we put colored flags on the UGVs and detect them by image processing techniques including color filtering, opening, connected component analysis, and bounding box fitting. Data association is then carried out by projecting the 3D positions of the UGV centers back onto the image plane. The camera pose used during this projection process is $\hat{\mathbf{p}}_t = \mathbf{p}^g_{t-1} \oplus (\mathbf{p}^{vio}_{t} \ominus \mathbf{p}^{vio}_{t-1})$, i.e., the composition of the latest key pose from the factor graph ($\mathbf{p}^g_{t-1}$) and the relative motion estimated by the VIO. 

The UGV pose prior factor is generated by integrating poses estimated by a state-of-the-art lidar-inertial odometry algorithm \cite{bai2022faster}. Since the UGVs' trajectories are much shorter in distance and less aggressive in motion, the lidar-inertial odometry can estimate their poses with only centimeter-level drift, which is several orders of magnitude smaller than the UAV's VIO drift. We use the GTSAM library \cite{dellaert2012factorgtsam, kaess2012isam2} as the backend.

\subsection{Simulation Experiments}
%%%%%%%%%%%%%%%%%%%%%%%%%%%%%%%%%%%%%%%%%
\begin{figure}[h!]
\centering
\includegraphics[trim=0 0 0 0, clip,width=1.0\columnwidth]{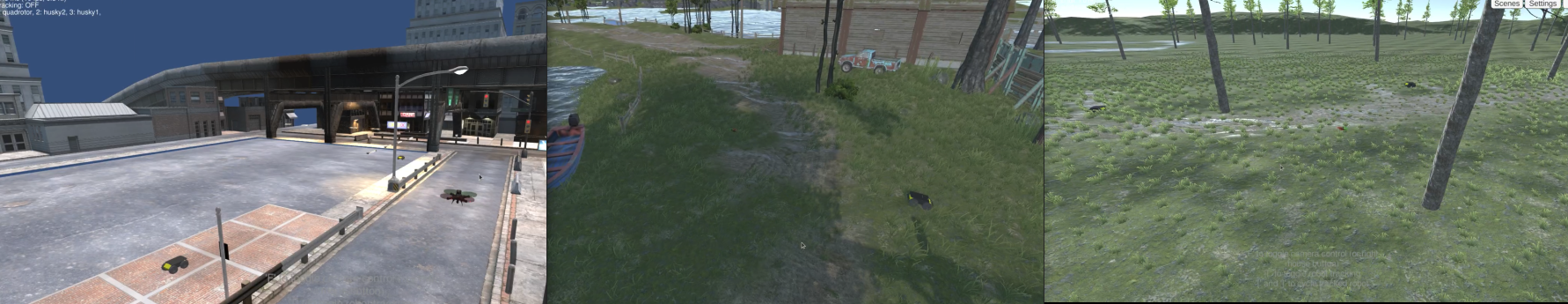}
    \caption{\textbf{Simulation experiment environments:} Structured (left), semi-structured (middle), unstructured (right).}
    \label{fig:simulation-environments}
\end{figure}
%%%%%%%%%%%%%%%%%%%%%%%%%%%%%%%%%%%%%%%%

\begin{figure*}[t!]
\begin{center} 
        % \vspace{-0.1in}
 \setlength%\extrarowheight{5pt}
 \resizebox{0.8\textwidth}{!}
 {\begin{tabular}{||c | c | l | l ||}
 \hline
 Environment / Traj. Length & Drift Reduction & ~~~ Ours Drift (X/Y/Z) (m) & VIO Drift (X/Y/Z) (m) \\ 
  \hline \hline
Structured / 2 km & \textbf{42.30\%} & 4.49 ( +4.22  /  +0.62  /  -1.41 ) &  7.79 ( +7.02  /  +2.29  /  -2.47 )\\ 
  \hline  \hline
 Semi-Structured / 1 km & \textbf{68.28\%} &  1.85 ( +0.46  /  +0.27  /  -1.77 ) &  5.83 ( +3.10  /  +4.26  /  -2.49 )  \\
  \hline 
Semi-Structured Aggressive / 2.5 km & \textbf{70.89\%} & 3.66 ( -1.58  /  +3.17  /  -0.92 ) &  12.57 ( +2.49  /  -7.98  /  -9.39 ) \\ 
  \hline \hline
Unstructured / 2.5 km & \textbf{79.08\%} &  0.70 ( +0.69  /  +0.09  /  +0.06 ) &  3.34 ( -0.78  /  -2.90  /  -1.46 ) \\ 
  \hline
Unstructured Aggressive / 3 km & \textbf{83.37\%} &  2.80 ( -0.80  /  +2.68  /  -0.05 ) &  16.82 ( -13.22  /  -10.08  /  -2.58 )  \\ 
  \hline
\end{tabular}}
\end{center}
        \vspace{-0.08in}
\caption{\textbf{Quantitative results for real-world experiments.} The performance improvement by our method is shown in the 2nd column. The average drift reduction is 68.78\% with 2 UGVs. Going from structured to unstructured environments, UGV detection becomes increasingly reliable. This is due to numerous false positives in the structured environment caused by objects with similar appearance to the UGV, such as cars, buses, and fire hydrants. In contrast, there are much fewer false positives in the unstructured environment. Furthermore, in unstructured environments, the VIO becomes less reliable due to the lack of static and reliable geometric features. As a result, the performance gain from the UGV bearing measurements is much more significant in unstructured environments. The performance gain in the real world with 2 UGVs is $\sim$10\% lower than in the simulation. One major difference we observe is that, in simulation, although we add noise to the bearing measurements, the UGV detection and data association are noise-free. This also shows the importance of accurate and robust UGV detection and data association. Improving these will be the future work of this paper.}
\label{fig:quant-result-table}
\end{figure*}

%%%%%%%%%%%%%%%%%%%%%%%%%%%%%%%%%%%%%%%%%
\begin{figure}[h!]
\centering
\includegraphics[trim=0 0 0 0, clip,width=0.75\columnwidth]{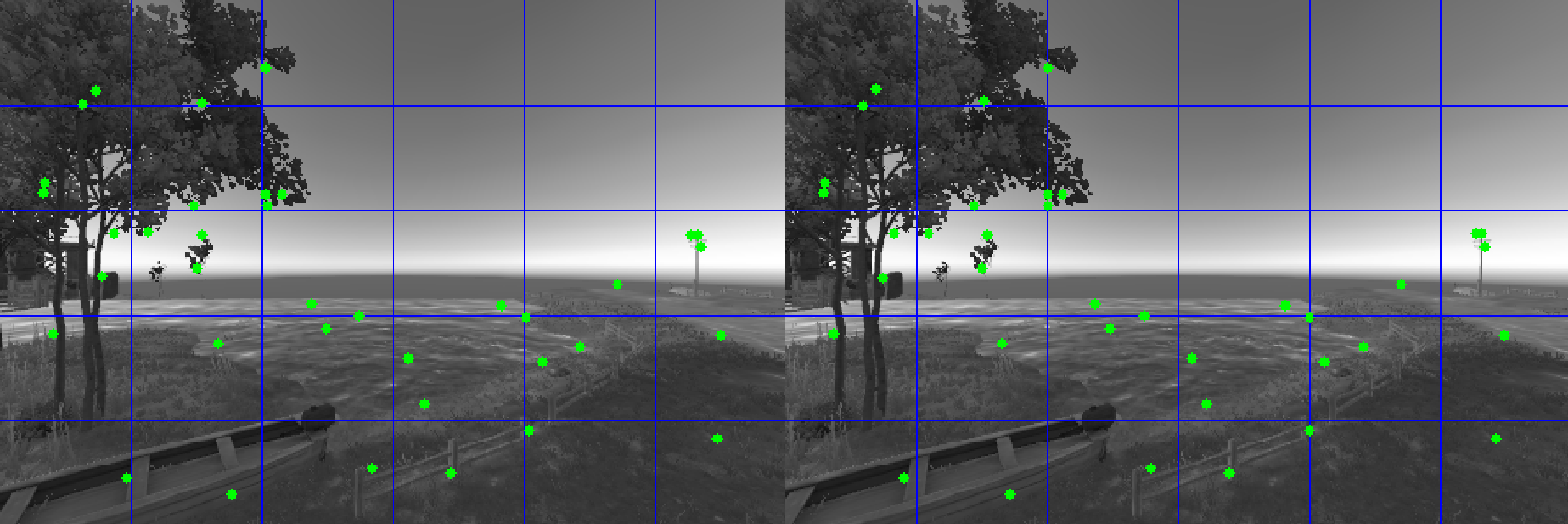}
    \caption{\textbf{Simulation stereo images and feature extraction example.} The simulated images are photo-realistic. The plot is generated by the stereo-MSCKF algorithm~\cite{sun2018robust}, where features extracted and tracked by the VIO are illustrated by green dots. Some features are extracted from dynamic objects (e.g., tree leaves, grass, and water), leading to additional drifts in VIO.}
    \label{fig:example-stereo-vio-features}
\end{figure}
%%%%%%%%%%%%%%%%%%%%%%%%%%%%%%%%%%%%%%%%

\textbf{Experiment environments:} We use a custom Unity-based simulator. The simulator is integrated with ROS and can simulate photorealistic sensor data such as RGB and depth images, and lidar point clouds. We choose the simulation environments that approximate the real-world experiment environments, as shown in \cref{fig:simulation-environments}. The structured environment is a city-like environment. The semi-structured environment is a rural area with flooded grounds and some structures like farmhouses, bridges, and cars. The unstructured environment is a sparse forest. These simulation environments have dynamic objects such as grass, water, and leaves.  

\textbf{Quantitative results and analysis:} The results of simulation experiments are shown in \cref{fig:quant-result-table-sim-3-robot}. We use our autonomous flight software stack \cite{liu2022large} in the simulator to accomplish all these flight missions. The trajectory planner is set to have a maximum acceleration of 9.90 $m/s^2$ ($\sim$1g) and a maximum velocity of 13 $m/s$. Such aggressive motions and the simulated noisy visual perception lead to significant VIO drifts, as shown in the last column of \cref{fig:quant-result-table-sim-3-robot}. An illustration showing the aggressive motions of one of the simulated experiments is in \cref{fig:velocity-profile}.

The second column of \cref{fig:quant-result-table-sim-3-robot} shows that by using our method, the VIO drifts are significantly reduced. The drift reduction is consistent across all environments. With 2 UGVs, the drift is reduced by 77.74\%. With 3 UGVs, the position drift is reduced by 86.32\%.

Such drastic drift reduction results from optimizing the positions of UGVs using our method, and cannot be obtained by merely introducing them into the environment at arbitrary locations. To show this, we randomly position the UGVs within the region enclosed by the UAV's waypoints. The UAV is commanded to execute the exact same missions, and uses the same factor graph estimation pipeline. The results are shown in the third column of \cref{fig:quant-result-table-sim-3-robot}. Our method consistently outperforms this random positioning method by a significant margin, resulting in 59.43\% less drift on average across all experiments.

With larger UAV and UGV teams, the results are shown in \cref{fig:quant-result-table-sim-10-robot}. Our algorithm can position the UGVs in a configuration so that the drift of each of the 10 UAVs is drastically reduced. The average drift reduction is 71.50\%. Compared to results in \cref{fig:quant-result-table-sim-3-robot}, the drift reduction is slightly smaller. This is a result of two factors: First, the average trajectory length of experiments in \cref{fig:quant-result-table-sim-3-robot} is 304\% the average trajectory length of experiments in \cref{fig:quant-result-table-sim-10-robot}. Since the VIO estimates the relative motion w.r.t. the previous time step, the errors in pose estimates accumulate over time. On the contrary, our method utilizes measurements from UGVs which can provide global correction to the odometry drift. Therefore, the longer the flight trajectory, the bigger drift reduction is expected. Second, even though we have more UGVs in \cref{fig:quant-result-table-sim-10-robot}, the UAV to UGV ratio is smaller. This means that on average fewer UGV resources are allocated for each UAV.

This set of simulation experiments demonstrates that our proposed method can consistently help the UAVs minimize their odometry drift, across all environments and flight missions. The corrected drift is an order of magnitude smaller than the raw VIO. Such drastic drift correction is critical when UAVs execute autonomous flight missions at scale.

%%%%%%%%%%%%%%%%%%%%%%%%%%%%%%%%%%%%%%%%%%%%%%%%%%%%%%%%%%%%%%%%%%%%%%%%%%
\subsection{Real World Experiments}

\textbf{Experiment environments:} To quantitatively verify the performance of our system, we perform experiments across multiple real-world scenarios, including an unstructured grass-covered area, an open structured area, and a moderately cluttered parking lot. To ensure that the UAV has the desired maneuvers (e.g., aggressive motions, landing at exactly the same position as takeoff), we manually piloted the UAV to fly through the waypoints. An illustration of these environments is shown in \cref{fig:exp-environments}.

%%%%%%%%%%%%%%%%%%%%%%%%%%%%%%%%%%%%%%%%%
\begin{figure}[h!]
\centering
\includegraphics[trim=0 0 0 190, clip,width=1.0\columnwidth]{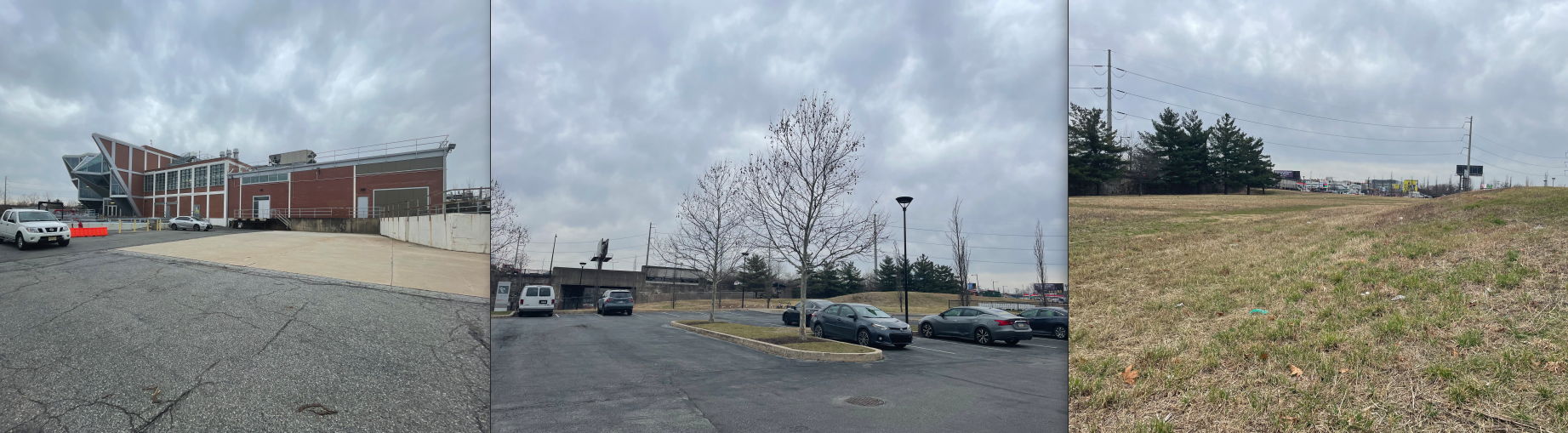}
    \caption{\textbf{Real world experiment environments:} Structured (left), semi-structured (middle), unstructured (right).}
    \label{fig:exp-environments}
\end{figure}
%%%%%%%%%%%%%%%%%%%%%%%%%%%%%%%%%%%%%%%%

\textbf{Quantitative results and analysis:} The real world experiment results are shown in \cref{fig:quant-result-table}. 
The drift reduction in the unstructured environment is most significant, with an average reduction of 81.23\%. A counter-intuitive result is that the resulting drift using our method decreases with the increase of the unstructuredness of the environment. This is mainly due to false data association since the structured environment has objects with similar colors as our UGV marker flag colors (buses, fire hydrants, construction barrels, etc.). On the contrary, the unstructured environment provides good contrast in color since the majority of the environment is covered by green grass, as shown in \cref{fig:exp-environments}. This can be addressed by using a better object detector and a more robust data association strategy.

To sum up, from multiple long-range UAV flight experiments in environments with a large variance in appearance, clutteredness, and scale, we show by using UGV detections as bearing constraints, the UAV's state estimation is significantly improved. Such improvement is consistent across all scenarios, but most significant when the UGVs are observed more frequently, and data association is more reliable. This indicates that the theoretical and algorithmic aspects of our system are correct and effective, and the system is robust to real-world noise.

\section{Discussion}
Throughout the experiments, we have demonstrated the proposed method's robustness and performance across various environments. Here, we will present its potential applications.

\textit{Coverage problem}: The missions demonstrated in our experiments all had prescribed setpoints for UAVs. However, since our algorithm's computational complexity is not dominated by the number of UAVs, it can be used in the coverage problem where the robot team must cover and map a given region. The user can specify the environment, and a sampling-based method can be used to determine the positions of UAVs' setpoints \cref{fig:coverage-application-example}. The UGVs will be positioned by the proposed method based on these setpoints to improve the UAVs' localization accuracy.

%%%%%%%%%%%%%%%%%%%%%%%%%%%%%%%%%%%%%%%%%
\begin{figure}[h!]
\centering
\includegraphics[trim=0 0 0 0, clip,width=1.0\columnwidth]{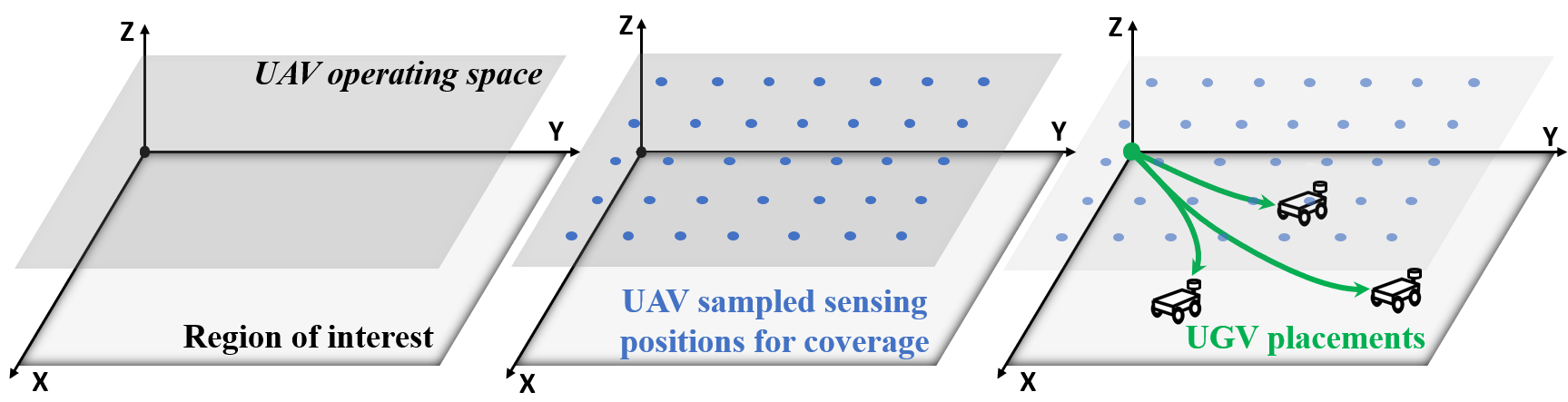}
    \caption{Coverage application example.}
    \label{fig:coverage-application-example}
    % \vspace{-0.2in}
\end{figure}
%%%%%%%%%%%%%%%%%%%%%%%%%%%%%%%%%%%%%%%%

\textit{Online active collaborative localization}: In our current experiments, we assumed that there was no communication during the mission execution. However, when the robots are allowed to communicate, our method can enable the UGVs to adapt their positions dynamically based on the performance of the collaborative localization algorithm using feedback of the errors in the trajectories of UAVs.
Specifically, when this happens, the UAVs can inform the UGVs to actively relocate themselves using the proposed method to minimize the UAVs' localization uncertainties.

\textit{Robust perception in perturbed environments}: Another important application of our method is to the problems where the environments are susceptible to changes. For example, some landmarks may be destroyed or disappear as time passes (e.g. rocks disappear with the tide, and buildings might get destroyed by earthquakes). It is challenging for UAVs to predict which part of the environment is perturbed. Our method offers UAVs a resort when they are uncertain about their localization.

\section{Conclusion}

This paper proposes an algorithm for active collaborative localization of a large number of SWAP-constrained robots by optimally positioning (and potentially navigating) a small number of more capable teammates. 
Specifically, we consider a team of aerial robots with cameras and IMUs that can leverage measurements of ground vehicles equipped with heavier sensors and processors. 
Numerical results show that our smooth optimization approach for the underlying active robot positioning problem outperforms the greedy algorithm in terms of accuracy, and matches the objective value of a computationally intensive heuristic evolutionary algorithm for global optimization while running in real time. 
Through experiments in photorealistic simulation environments, the proposed method reduces the UAV odometry drift by 90\% with 3 UGVs. 
It also outperforms randomly positioning UGVs, and shows a robust and significant reduction in drift in real-world experiments. 
The performance of our system is consistent throughout a range of scenarios that feature large variations in robot motions as well as the amount of structure in the environment. 
This robust and drastic improvement in state estimation enables long-range autonomous navigation. 
The proposed active collaborative localization method can be used in various real-world applications in which we encounter environments without texture or features for aerial flight. 
Future work will address the use of errors in localization as a feedback mechanism to modify trajectories for the $\mathcal{C}$-agents and allow bidirectional flow of information between $\mathcal{C}$-agents and $\mathcal{L}$-agents. 
We also acknowledge the need to improve the robustness of our method to potential errors in data association and scale up to a bigger team of robots.

\bibliographystyle{IEEEtran}
\bibliography{papers}

% Generated by IEEEtran.bst, version: 1.14 (2015/08/26)
\begin{thebibliography}{10}
\providecommand{\url}[1]{#1}
\csname url@samestyle\endcsname
\providecommand{\newblock}{\relax}
\providecommand{\bibinfo}[2]{#2}
\providecommand{\BIBentrySTDinterwordspacing}{\spaceskip=0pt\relax}
\providecommand{\BIBentryALTinterwordstretchfactor}{4}
\providecommand{\BIBentryALTinterwordspacing}{\spaceskip=\fontdimen2\font plus
\BIBentryALTinterwordstretchfactor\fontdimen3\font minus
  \fontdimen4\font\relax}
\providecommand{\BIBforeignlanguage}[2]{{%
\expandafter\ifx\csname l@#1\endcsname\relax
\typeout{** WARNING: IEEEtran.bst: No hyphenation pattern has been}%
\typeout{** loaded for the language `#1'. Using the pattern for}%
\typeout{** the default language instead.}%
\else
\language=\csname l@#1\endcsname
\fi
#2}}
\providecommand{\BIBdecl}{\relax}
\BIBdecl

\bibitem{liu2022large}
X.~Liu, G.~V. Nardari, F.~C. Ojeda, Y.~Tao, A.~Zhou, T.~Donnelly, C.~Qu, S.~W.
  Chen, R.~A. Romero, C.~J. Taylor \emph{et~al.}, ``Large-scale autonomous
  flight with real-time semantic slam under dense forest canopy,'' \emph{IEEE
  Robotics and Automation Letters}, vol.~7, no.~2, pp. 5512--5519, 2022.

\bibitem{miller2022stronger}
I.~D. Miller, F.~Cladera, T.~Smith, C.~J. Taylor, and V.~Kumar, ``Stronger
  together: Air-ground robotic collaboration using semantics,'' \emph{IEEE
  Robotics and Automation Letters}, vol.~7, no.~4, pp. 9643--9650, 2022.

\bibitem{newcombe2011dtam}
R.~A. Newcombe, S.~J. Lovegrove, and A.~J. Davison, ``Dtam: Dense tracking and
  mapping in real-time,'' in \emph{2011 international conference on computer
  vision}.\hskip 1em plus 0.5em minus 0.4em\relax IEEE, 2011, pp. 2320--2327.

\bibitem{engel2014lsd}
J.~Engel, T.~Sch{\"o}ps, and D.~Cremers, ``Lsd-slam: Large-scale direct
  monocular slam,'' in \emph{European conference on computer vision}.\hskip 1em
  plus 0.5em minus 0.4em\relax Springer, 2014, pp. 834--849.

\bibitem{delmerico2018benchmark}
J.~Delmerico and D.~Scaramuzza, ``A benchmark comparison of monocular
  visual-inertial odometry algorithms for flying robots,'' in \emph{2018 IEEE
  international conference on robotics and automation (ICRA)}.\hskip 1em plus
  0.5em minus 0.4em\relax IEEE, 2018, pp. 2502--2509.

\bibitem{qin2018vins}
T.~Qin, P.~Li, and S.~Shen, ``Vins-mono: A robust and versatile monocular
  visual-inertial state estimator,'' \emph{IEEE Transactions on Robotics},
  vol.~34, no.~4, pp. 1004--1020, 2018.

\bibitem{sun2018robust}
K.~Sun, K.~Mohta, B.~Pfrommer, M.~Watterson, S.~Liu, Y.~Mulgaonkar, C.~J.
  Taylor, and V.~Kumar, ``Robust stereo visual inertial odometry for fast
  autonomous flight,'' \emph{IEEE Robotics and Automation Letters}, vol.~3,
  no.~2, pp. 965--972, 2018.

\bibitem{mur2015orb}
R.~Mur-Artal, J.~M.~M. Montiel, and J.~D. Tardos, ``Orb-slam: a versatile and
  accurate monocular slam system,'' \emph{IEEE transactions on robotics},
  vol.~31, no.~5, pp. 1147--1163, 2015.

\bibitem{campos2021orb}
C.~Campos, R.~Elvira, J.~J.~G. Rodr{\'\i}guez, J.~M. Montiel, and J.~D.
  Tard{\'o}s, ``Orb-slam3: An accurate open-source library for visual,
  visual--inertial, and multimap slam,'' \emph{IEEE Transactions on Robotics},
  vol.~37, no.~6, pp. 1874--1890, 2021.

\bibitem{mohta2018fast}
K.~Mohta, M.~Watterson, Y.~Mulgaonkar, S.~Liu, C.~Qu, A.~Makineni, K.~Saulnier,
  K.~Sun, A.~Zhu, J.~Delmerico \emph{et~al.}, ``Fast, autonomous flight in
  gps-denied and cluttered environments,'' \emph{Journal of Field Robotics},
  vol.~35, no.~1, pp. 101--120, 2018.

\bibitem{mohta2018experiments}
K.~Mohta, K.~Sun, S.~Liu, M.~Watterson, B.~Pfrommer, J.~Svacha, Y.~Mulgaonkar,
  C.~J. Taylor, and V.~Kumar, ``Experiments in fast, autonomous, gps-denied
  quadrotor flight,'' in \emph{2018 IEEE International Conference on Robotics
  and Automation (ICRA)}.\hskip 1em plus 0.5em minus 0.4em\relax IEEE, 2018,
  pp. 7832--7839.

\bibitem{lin2018autonomous}
Y.~Lin, F.~Gao, T.~Qin, W.~Gao, T.~Liu, W.~Wu, Z.~Yang, and S.~Shen,
  ``Autonomous aerial navigation using monocular visual-inertial fusion,''
  \emph{Journal of Field Robotics}, vol.~35, no.~1, pp. 23--51, 2018.

\bibitem{zhou2022swarm}
X.~Zhou, X.~Wen, Z.~Wang, Y.~Gao, H.~Li, Q.~Wang, T.~Yang, H.~Lu, Y.~Cao, C.~Xu
  \emph{et~al.}, ``Swarm of micro flying robots in the wild,'' \emph{Science
  Robotics}, vol.~7, no.~66, p. eabm5954, 2022.

\bibitem{bao2011semantic}
S.~Y. Bao and S.~Savarese, ``Semantic structure from motion,'' in \emph{CVPR
  2011}.\hskip 1em plus 0.5em minus 0.4em\relax IEEE, 2011, pp. 2025--2032.

\bibitem{salas2013slam++}
R.~F. Salas-Moreno, R.~A. Newcombe, H.~Strasdat, P.~H. Kelly, and A.~J.
  Davison, ``Slam++: Simultaneous localisation and mapping at the level of
  objects,'' in \emph{Proceedings of the IEEE conference on computer vision and
  pattern recognition}, 2013, pp. 1352--1359.

\bibitem{bowman2017probabilistic}
S.~L. Bowman, N.~Atanasov, K.~Daniilidis, and G.~J. Pappas, ``Probabilistic
  data association for semantic slam,'' in \emph{2017 IEEE international
  conference on robotics and automation (ICRA)}.\hskip 1em plus 0.5em minus
  0.4em\relax IEEE, 2017, pp. 1722--1729.

\bibitem{nicholson2018quadricslam}
L.~Nicholson, M.~Milford, and N.~S{\"u}nderhauf, ``Quadricslam: Dual quadrics
  from object detections as landmarks in object-oriented slam,'' \emph{IEEE
  Robotics and Automation Letters}, vol.~4, no.~1, pp. 1--8, 2018.

\bibitem{yang2019cubeslam}
S.~Yang and S.~Scherer, ``Cubeslam: Monocular 3-d object slam,'' \emph{IEEE
  Transactions on Robotics}, vol.~35, no.~4, pp. 925--938, 2019.

\bibitem{shan2020orcvio}
M.~Shan, Q.~Feng, and N.~Atanasov, ``Orcvio: object residual constrained
  visual-inertial odometry,'' in \emph{2020 IEEE/RSJ International Conference
  on Intelligent Robots and Systems (IROS)}.\hskip 1em plus 0.5em minus
  0.4em\relax IEEE, 2020, pp. 5104--5111.

\bibitem{chen2020sloam}
S.~W. Chen, G.~V. Nardari, E.~S. Lee, C.~Qu, X.~Liu, R.~A.~F. Romero, and
  V.~Kumar, ``Sloam: Semantic lidar odometry and mapping for forest
  inventory,'' \emph{IEEE Robotics and Automation Letters}, vol.~5, no.~2, pp.
  612--619, 2020.

\bibitem{bajcsy1988}
R.~Bajcsy, ``Active perception,'' \emph{Proceedings of the IEEE}, vol.~76,
  no.~8, pp. 966--1005, 1988.

\bibitem{sondikPOMDP}
\BIBentryALTinterwordspacing
E.~J. Sondik, ``The optimal control of partially observable markov processes
  over the infinite horizon: Discounted costs,'' \emph{Operations Research},
  vol.~26, no.~2, pp. 282--304, 1978. [Online]. Available:
  \url{http://www.jstor.org/stable/169635}
\BIBentrySTDinterwordspacing

\bibitem{kaelbling1998planning}
L.~P. Kaelbling, M.~L. Littman, and A.~R. Cassandra, ``Planning and acting in
  partially observable stochastic domains,'' \emph{Artificial intelligence},
  vol. 101, no. 1-2, pp. 99--134, 1998.

\bibitem{athansLQGhistory}
M.~Athans, ``The role and use of the stochastic linear-quadratic-gaussian
  problem in control system design,'' \emph{IEEE Transactions on Automatic
  Control}, vol.~16, no.~6, pp. 529--552, 1971.

\bibitem{complexityofMDPs}
\BIBentryALTinterwordspacing
C.~H. Papadimitriou and J.~N. Tsitsiklis, ``The complexity of markov decision
  processes,'' \emph{Mathematics of Operations Research}, vol.~12, no.~3, pp.
  441--450, 1987. [Online]. Available:
  \url{http://www.jstor.org/stable/3689975}
\BIBentrySTDinterwordspacing

\bibitem{prentice2009beliefroadmap}
S.~Prentice and N.~Roy, ``The belief roadmap: Efficient planning in belief
  space by factoring the covariance,'' \emph{The International Journal of
  Robotics Research}, vol.~28, no. 11-12, pp. 1448--1465, 2009.

\bibitem{bry2011rapidly}
A.~Bry and N.~Roy, ``Rapidly-exploring random belief trees for motion planning
  under uncertainty,'' in \emph{2011 IEEE international conference on robotics
  and automation}.\hskip 1em plus 0.5em minus 0.4em\relax IEEE, 2011, pp.
  723--730.

\bibitem{van2011lqgmp}
J.~Van Den~Berg, P.~Abbeel, and K.~Goldberg, ``Lqg-mp: Optimized path planning
  for robots with motion uncertainty and imperfect state information,''
  \emph{The International Journal of Robotics Research}, vol.~30, no.~7, pp.
  895--913, 2011.

\bibitem{van2012motion}
J.~Van Den~Berg, S.~Patil, and R.~Alterovitz, ``Motion planning under
  uncertainty using iterative local optimization in belief space,'' \emph{The
  International Journal of Robotics Research}, vol.~31, no.~11, pp. 1263--1278,
  2012.

\bibitem{charrow2015information}
B.~Charrow, G.~Kahn, S.~Patil, S.~Liu, K.~Goldberg, P.~Abbeel, N.~Michael, and
  V.~Kumar, ``Information-theoretic planning with trajectory optimization for
  dense 3d mapping.'' in \emph{Robotics: Science and Systems}, vol.~11.\hskip
  1em plus 0.5em minus 0.4em\relax Rome, 2015, pp. 3--12.

\bibitem{atanasov2015decentralized}
N.~Atanasov, J.~Le~Ny, K.~Daniilidis, and G.~J. Pappas, ``Decentralized active
  information acquisition: Theory and application to multi-robot slam,'' in
  \emph{2015 IEEE International Conference on Robotics and Automation
  (ICRA)}.\hskip 1em plus 0.5em minus 0.4em\relax IEEE, 2015, pp. 4775--4782.

\bibitem{sun2020stochastic}
K.~Sun, B.~Schlotfeldt, G.~J. Pappas, and V.~Kumar, ``Stochastic motion
  planning under partial observability for mobile robots with continuous range
  measurements,'' \emph{IEEE Transactions on Robotics}, vol.~37, no.~3, pp.
  979--995, 2020.

\bibitem{falanga2018pampc}
D.~Falanga, P.~Foehn, P.~Lu, and D.~Scaramuzza, ``Pampc: Perception-aware model
  predictive control for quadrotors,'' in \emph{2018 IEEE/RSJ International
  Conference on Intelligent Robots and Systems (IROS)}.\hskip 1em plus 0.5em
  minus 0.4em\relax IEEE, 2018, pp. 1--8.

\bibitem{zhang2020fisher}
Z.~Zhang and D.~Scaramuzza, ``Fisher information field: an efficient and
  differentiable map for perception-aware planning,'' \emph{arXiv preprint
  arXiv:2008.03324}, 2020.

\bibitem{tordesillas2022panther}
J.~Tordesillas and J.~P. How, ``Panther: Perception-aware trajectory planner in
  dynamic environments,'' \emph{IEEE Access}, vol.~10, pp. 22\,662--22\,677,
  2022.

\bibitem{spasojevic2020perception}
I.~Spasojevic, V.~Murali, and S.~Karaman, ``Perception-aware time optimal path
  parameterization for quadrotors,'' in \emph{2020 IEEE International
  Conference on Robotics and Automation (ICRA)}.\hskip 1em plus 0.5em minus
  0.4em\relax IEEE, 2020, pp. 3213--3219.

\bibitem{murali2019acc}
V.~Murali, I.~Spasojevic, W.~Guerra, and S.~Karaman, ``Perception-aware
  trajectory generation for aggressive quadrotor flight using differential
  flatness,'' in \emph{2019 American Control Conference (ACC)}, 2019, pp.
  3936--3943.

\bibitem{martinelli2005multi}
A.~Martinelli, F.~Pont, and R.~Siegwart, ``Multi-robot localization using
  relative observations,'' in \emph{Proceedings of the 2005 IEEE international
  conference on robotics and automation}.\hskip 1em plus 0.5em minus
  0.4em\relax IEEE, 2005, pp. 2797--2802.

\bibitem{FranchiMutualLocalizationIROS09}
A.~Franchi, G.~Oriolo, and P.~Stegagno, ``Mutual localization in a multi-robot
  system with anonymous relative position measures,'' in \emph{2009 IEEE/RSJ
  International Conference on Intelligent Robots and Systems}, 2009, pp.
  3974--3980.

\bibitem{prorok2014accurate}
A.~Prorok and A.~Martinoli, ``Accurate indoor localization with ultra-wideband
  using spatial models and collaboration,'' \emph{The International Journal of
  Robotics Research}, vol.~33, no.~4, pp. 547--568, 2014.

\bibitem{nguyen2020vision}
T.~Nguyen, K.~Mohta, C.~J. Taylor, and V.~Kumar, ``Vision-based multi-mav
  localization with anonymous relative measurements using coupled probabilistic
  data association filter,'' in \emph{2020 IEEE International Conference on
  Robotics and Automation (ICRA)}.\hskip 1em plus 0.5em minus 0.4em\relax IEEE,
  2020, pp. 3349--3355.

\bibitem{gaoMutualLocalizationRAL22}
Y.~Wang, X.~Wen, L.~Yin, C.~Xu, Y.~Cao, and F.~Gao, ``Certifiably optimal
  mutual localization with anonymous bearing measurements,'' \emph{IEEE
  Robotics and Automation Letters}, vol.~7, no.~4, pp. 9374--9381, 2022.

\bibitem{tian2021distributed}
Y.~Tian, K.~Khosoussi, D.~M. Rosen, and J.~P. How, ``Distributed certifiably
  correct pose-graph optimization,'' \emph{IEEE Transactions on Robotics},
  vol.~37, no.~6, pp. 2137--2156, 2021.

\bibitem{rosen2022distributed}
Y.~Tian, A.~S. Bedi, A.~Koppel, M.~Calvo-Fullana, D.~M. Rosen, and J.~P. How,
  ``Distributed riemannian optimization with lazy communication for
  collaborative geometric estimation,'' in \emph{2022 IEEE/RSJ International
  Conference on Intelligent Robots and Systems (IROS)}, 2022, pp. 4391--4398.

\bibitem{kontitsis2013multi}
M.~Kontitsis, E.~A. Theodorou, and E.~Todorov, ``Multi-robot active slam with
  relative entropy optimization,'' in \emph{2013 American Control
  Conference}.\hskip 1em plus 0.5em minus 0.4em\relax IEEE, 2013, pp.
  2757--2764.

\bibitem{chen2020broadcast}
Y.~Chen, L.~Zhao, K.~M.~B. Lee, C.~Yoo, S.~Huang, and R.~Fitch, ``Broadcast
  your weaknesses: cooperative active pose-graph slam for multiple robots,''
  \emph{IEEE Robotics and Automation Letters}, vol.~5, no.~2, pp. 2200--2207,
  2020.

\bibitem{kurazume1994cooperative}
R.~Kurazume, S.~Nagata, and S.~Hirose, ``Cooperative positioning with multiple
  robots,'' in \emph{Proceedings of the 1994 IEEE International Conference on
  Robotics and Automation}.\hskip 1em plus 0.5em minus 0.4em\relax IEEE, 1994,
  pp. 1250--1257.

\bibitem{trawny2004optimized}
N.~Trawny and T.~Barfoot, ``Optimized motion strategies for cooperative
  localization of mobile robots,'' in \emph{IEEE International Conference on
  Robotics and Automation, 2004. Proceedings. ICRA'04. 2004}, vol.~1.\hskip 1em
  plus 0.5em minus 0.4em\relax IEEE, 2004, pp. 1027--1032.

\bibitem{tully2010leap}
S.~Tully, G.~Kantor, and H.~Choset, ``Leap-frog path design for multi-robot
  cooperative localization,'' in \emph{Field and Service Robotics: Results of
  the 7th International Conference}.\hskip 1em plus 0.5em minus 0.4em\relax
  Springer, 2010, pp. 307--317.

\bibitem{spletzer2003dynamic}
J.~R. Spletzer and C.~J. Taylor, ``Dynamic sensor planning and control for
  optimally tracking targets,'' \emph{The International Journal of Robotics
  Research}, vol.~22, no.~1, pp. 7--20, 2003.

\bibitem{lifeng23trackingTRO}
L.~Zhou and V.~Kumar, ``Robust multi-robot active target tracking against
  sensing and communication attacks,'' \emph{IEEE Transactions on Robotics},
  pp. 1--13, 2023.

\bibitem{lifengtzoumas19RAL}
L.~Zhou, V.~Tzoumas, G.~J. Pappas, and P.~Tokekar, ``Resilient active target
  tracking with multiple robots,'' \emph{IEEE Robotics and Automation Letters},
  vol.~4, no.~1, pp. 129--136, 2019.

\bibitem{feige1998threshold}
U.~Feige, ``A threshold of ln n for approximating set cover,'' \emph{Journal of
  the ACM (JACM)}, vol.~45, no.~4, pp. 634--652, 1998.

\bibitem{olshevsky2014minimal}
A.~Olshevsky, ``Minimal controllability problems,'' \emph{IEEE Transactions on
  Control of Network Systems}, vol.~1, no.~3, pp. 249--258, 2014.

\bibitem{krause2008near}
A.~Krause, A.~Singh, and C.~Guestrin, ``Near-optimal sensor placements in
  gaussian processes: Theory, efficient algorithms and empirical studies.''
  \emph{Journal of Machine Learning Research}, vol.~9, no.~2, 2008.

\bibitem{jawaid2015submodularity}
S.~T. Jawaid and S.~L. Smith, ``Submodularity and greedy algorithms in sensor
  scheduling for linear dynamical systems,'' \emph{Automatica}, vol.~61, pp.
  282--288, 2015.

\bibitem{carlone2018attention}
L.~Carlone and S.~Karaman, ``Attention and anticipation in fast visual-inertial
  navigation,'' \emph{IEEE Transactions on Robotics}, vol.~35, no.~1, pp.
  1--20, 2018.

\bibitem{tzoumas2020lqg}
V.~Tzoumas, L.~Carlone, G.~J. Pappas, and A.~Jadbabaie, ``L{Q}{G} control and
  sensing co-design,'' \emph{IEEE Transactions on Automatic Control}, vol.~66,
  no.~4, pp. 1468--1483, 2020.

\bibitem{chamon2020approximate}
L.~F. Chamon, G.~J. Pappas, and A.~Ribeiro, ``Approximate supermodularity of
  {K}alman filter sensor selection,'' \emph{IEEE Transactions on Automatic
  Control}, vol.~66, no.~1, pp. 49--63, 2020.

\bibitem{globalOptimizationAlgo}
T.~Runarsson and X.~Yao, ``Search biases in constrained evolutionary
  optimization,'' \emph{Systems, Man, and Cybernetics, Part C: Applications and
  Reviews, IEEE Transactions on}, vol.~35, pp. 233 -- 243, 06 2005.

\bibitem{wachter2006implementation}
A.~W{\"a}chter and L.~T. Biegler, ``On the implementation of an interior-point
  filter line-search algorithm for large-scale nonlinear programming,''
  \emph{Mathematical programming}, vol. 106, pp. 25--57, 2006.

\bibitem{nlopt}
\BIBentryALTinterwordspacing
S.~G. Johnson. {The NLopt nonlinear-optimization package}. [Online]. Available:
  \url{http://github.com/stevengj/nlopt}
\BIBentrySTDinterwordspacing

\bibitem{kaess2012isam2}
M.~Kaess, H.~Johannsson, R.~Roberts, V.~Ila, J.~J. Leonard, and F.~Dellaert,
  ``isam2: Incremental smoothing and mapping using the bayes tree,'' \emph{The
  International Journal of Robotics Research}, vol.~31, no.~2, pp. 216--235,
  2012.

\bibitem{dellaert2012factorgtsam}
F.~Dellaert, ``Factor graphs and gtsam: A hands-on introduction,'' Georgia
  Institute of Technology, Tech. Rep., 2012.

\bibitem{bai2022faster}
C.~Bai, T.~Xiao, Y.~Chen, H.~Wang, F.~Zhang, and X.~Gao, ``Faster-lio:
  Lightweight tightly coupled lidar-inertial odometry using parallel sparse
  incremental voxels,'' \emph{IEEE Robotics and Automation Letters}, vol.~7,
  no.~2, pp. 4861--4868, 2022.

\bibitem{weyl1912asymptotische}
H.~Weyl, ``Das asymptotische verteilungsgesetz der eigenwerte linearer
  partieller differentialgleichungen (mit einer anwendung auf die theorie der
  hohlraumstrahlung),'' \emph{Mathematische Annalen}, vol.~71, no.~4, pp.
  441--479, 1912.

\end{thebibliography}
\section{Appendix}
\label{sec:appendix_a}

We split the proof of Theorem \ref{theorem:theorem_multiplicative} into two claims. 
Roughly speaking, our reasoning hinges on the fact that the trace of the covariance matrix is just the sum of inverses of the eigenvalues of the information matrix. 
The rest of the proof involves analyzing how the sum of inverses of a sequence of positive real numbers behaves under suitable perturbations. 
This is precisely the content of the following 

\begin{claim} \label{cl:claim_one}
Let $\delta, \eta > 0$ be a fixed pair of real numbers, and define
\begin{equation} \label{eq:max_s0}
s_0 = \frac{\eta}{\eta + 1} \delta^{-1}.
\end{equation}
For any pair of sequences $(\lambda_k)_{k = 1}^d, \ (\tilde{\lambda}_k)_{k = 1}^d \in (0, \infty)^d$ such that $|\tilde{\lambda}_k - \lambda_k | \leq \delta \ \forall 1 \leq k \leq d$, we have the implication
\begin{equation} \label{eq:claim_one_result}
\sum_{k = 1}^{d} \frac{1}{\lambda_k} = s \leq s_0 
\quad \Rightarrow \quad 
\frac{s}{1 + \eta} \leq \sum_{k = 1}^{d} \frac{1}{\tilde{\lambda}_k} \leq s (1 + \eta).
\end{equation}
\label{eqn:claim-1-implications}
\end{claim}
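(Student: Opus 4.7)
The hypothesis $\sum_k 1/\lambda_k = s \le s_0$ is a \emph{global} constraint on the multi-set $(\lambda_k)$, but the conclusion is a pair of term-by-term-friendly inequalities after the perturbation $\lambda_k \mapsto \tilde\lambda_k$. So the first move is to turn the global hypothesis into a uniform \emph{lower bound} on each $\lambda_k$: since all the $1/\lambda_k$ are positive and sum to $s$, each satisfies $1/\lambda_k \le s \le s_0$, hence
\[
\lambda_k \;\ge\; \frac{1}{s_0} \;=\; \frac{\eta+1}{\eta}\,\delta \;>\; \delta.
\]
In particular $\tilde\lambda_k \ge \lambda_k - \delta > 0$, so $1/\tilde\lambda_k$ is well-defined, and the relative perturbation is controlled:
\[
\left|\frac{\tilde\lambda_k - \lambda_k}{\lambda_k}\right| \;\le\; \frac{\delta}{\lambda_k} \;\le\; \delta s_0 \;=\; \frac{\eta}{\eta+1}.
\]

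Next I would convert this into multiplicative bounds on $1/\tilde\lambda_k$ versus $1/\lambda_k$. Writing $\tilde\lambda_k = \lambda_k(1+\varepsilon_k)$ with $|\varepsilon_k| \le \eta/(\eta+1)$, the upper bound is achieved at $\varepsilon_k = -\eta/(\eta+1)$, giving
\[
\frac{1}{\tilde\lambda_k} \;\le\; \frac{1}{\lambda_k}\cdot\frac{1}{1-\eta/(\eta+1)} \;=\; (\eta+1)\cdot\frac{1}{\lambda_k},
\]
and the lower bound at $\varepsilon_k = +\eta/(\eta+1)$, giving
\[
\frac{1}{\tilde\lambda_k} \;\ge\; \frac{1}{\lambda_k}\cdot\frac{\eta+1}{2\eta+1}.
\]

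Summing over $k \in \{1,\dots,d\}$ and using $\sum_k 1/\lambda_k = s$ immediately yields $\sum_k 1/\tilde\lambda_k \le s(1+\eta)$, which is the desired upper half of \eqref{eq:claim_one_result}. For the lower half, I still need $\frac{\eta+1}{2\eta+1} \ge \frac{1}{1+\eta}$; cross-multiplying (both sides positive) this reduces to $(\eta+1)^2 \ge 2\eta+1$, i.e. $\eta^2 \ge 0$, which is trivially true. So $\sum_k 1/\tilde\lambda_k \ge \frac{\eta+1}{2\eta+1}\,s \ge \frac{s}{1+\eta}$, finishing the claim.

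The proof is essentially a one-shot relative-perturbation estimate, and the only "choice" in it is the threshold $s_0 = \frac{\eta}{\eta+1}\delta^{-1}$: that value is exactly what makes the relative perturbation $\delta/\lambda_k$ at most $\eta/(\eta+1)$, which in turn makes the worst-case multiplicative blow-up of $1/(1+\varepsilon_k)$ equal to $1+\eta$. The only mildly subtle point --- and the one place where a small amount of care is needed --- is verifying that the worst-case \emph{lower} multiplicative factor $\frac{\eta+1}{2\eta+1}$ comfortably dominates $\frac{1}{1+\eta}$; this is where the symmetric-looking bound $s/(1+\eta) \le \sum_k 1/\tilde\lambda_k \le s(1+\eta)$ in the statement is in fact not quite tight on the lower side, but the slack is exactly the $\eta^2 \ge 0$ observation above.
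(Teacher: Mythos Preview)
Your proof is correct, and it takes a genuinely different route from the paper's. The paper attacks each inequality via a constrained optimization: for the upper bound it maximizes $\sum_k 1/(\lambda_k - \delta)$ subject to $\sum_k 1/\lambda_k = s$, substitutes $a_k = 1/\lambda_k$, observes that $a \mapsto 1/(a^{-1}-\delta)$ is convex on $[0,s]$, and concludes the maximum over the simplex is attained at a vertex $(s,0,\dots,0)$, yielding $s/(1-s\delta) \le (1+\eta)s$; the lower bound is handled symmetrically by minimizing a concave function over the same simplex. You instead extract a \emph{uniform} lower bound $\lambda_k \ge 1/s_0$ from the global constraint and then apply a termwise multiplicative perturbation estimate. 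Both routes land on the same intermediate lower factor $(\eta+1)/(2\eta+1)$ before relaxing to $1/(1+\eta)$ via $\eta^2 \ge 0$. Your argument is more elementary and transparent (no convexity, no optimization), while the paper's argument has the conceptual advantage of locating the exact extremal configuration (all the ``mass'' on one index), which explains why the termwise bound loses nothing here.
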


\begin{proof}
To prove the inequality on the right, it suffices to show 
\begin{equation} \label{eq:claim_one_pf_right}
\sum_{k = 1}^{d} \frac{1}{\lambda_k} = s \leq s_0 
\quad \Rightarrow \quad
\sum_{k = 1}^{d} \frac{1}{\lambda_k - \delta} \leq s (1 + \eta).
\end{equation}
Indeed, since $\lambda_k > 0 \ \forall k$, we have 
\begin{equation}
\frac{1}{\lambda_k} \leq s \ \Rightarrow \ \lambda_k \geq s^{-1} \geq s_0^{-1} = \frac{1 + \eta}{\eta} \delta > \delta \ \forall k,
\end{equation}
showing that the expression on the right of Equation \ref{eq:claim_one_pf_right} is well-defined. 
Now, consider the following optimization problem
\begin{equation}
\begin{aligned}
\sup_{(\lambda_k)_{k = 0}^d \in (0, \infty)^d} & \ \sum_{k = 1}^{d} \frac{1}{\lambda_k - \delta} \\
& \ s.t. \ \sum_{k = 1}^{d} \frac{1}{\lambda_k} = s.
\end{aligned}
\end{equation}
To show that it is bounded above by $(1 + \eta) s$, upon introducing variables $a_k := \lambda_k^{-1}$, we reformulate it as follows 
\begin{equation}
\begin{aligned}
\max_{(a_k)_{k = 0}^d \in [0, s]^d} & \ \sum_{k = 1}^{d} \frac{1}{a_k^{-1} - \delta} \\
& \ s.t. \ \sum_{k = 1}^{d} a_k = s.
\end{aligned}
\label{eqn:aux-right}
\end{equation}
We note that the function $f(x) = \frac{1}{x^{-1} - \delta}$ is convex on $x \in [0, s] \subseteq [0, \delta^{-1}]$ due to
$f''(x) = \frac{2\delta}{(1 - x \delta)^3} > 0$ on the the mentioned interval. 
As a result, Problem \ref{eqn:aux-right} involves maximizing a convex function subject to a simplex constraint. 
Hence, an optimum is attained at one of the extreme vertices of the feasible region, which all take the form $(s, 0, \dots, 0)$ up to a permutation of indices. 
In any case, all such extreme points have the objective value 
\begin{equation}
\frac{1}{s^{-1} - \delta} = \frac{s}{1 - s \delta} \leq \frac{s}{1 - s_0 \delta}  = (1 + \eta) s,
\end{equation}
thus proving the desired inequality. 

Similarly, to prove the inequality on the left, it suffices to show 
\begin{equation}
\sum_{k = 1}^{d} \frac{1}{\lambda_k} = s \leq s_0 
\quad \Rightarrow \quad
\sum_{k = 1}^{d} \frac{1}{\lambda_k + \delta} \geq \frac{s}{1 + \eta}.
\end{equation}
Now, consider the following optimization problem
\begin{equation}
\begin{aligned}
\inf_{(\lambda_k)_{k = 0}^d \in (0, \infty)^d} & \ \sum_{k = 1}^{d} \frac{1}{\lambda_k + \delta} \\
& \ s.t. \ \sum_{k = 1}^{d} \frac{1}{\lambda_k} = s.
\end{aligned}
\end{equation}
To show that it is bounded below by $\frac{s}{1 + \eta}$, upon introducing variables $a_k := \lambda_k^{-1}$, we reformulate it as follows 
\begin{equation}
\begin{aligned}
\min_{(a_k)_{k = 0}^d \in [0, s]^d} & \ \sum_{k = 1}^{d} \frac{1}{a_k^{-1} + \delta} \\
& \ s.t. \ \sum_{k = 1}^{d} a_k = s.
\end{aligned}
\label{eqn:aux-left}
\end{equation}
We note that the function $f(x) = \frac{1}{x^{-1} + \delta}$ is concave on $x \in [0, s] \subseteq [0, \delta^{-1}]$ due to
$f''(x) = \frac{-2\delta}{(1 + x \delta)^3} < 0$. 
As a result, Problem \ref{eqn:aux-left} involves minimizing a concave function subject to a simplex constraint. 
Hence, an optimum is attained at one of the extreme vertices of the feasible region, which all take the form $(s, 0, \dots, 0)$ up to a permutation of indices.
In any case, all such extreme points have the objective value 
\begin{equation}
\frac{1}{s^{-1} + \delta} = \frac{s}{1 + s \delta} \geq \frac{s}{1 + s_0 \delta}  = \frac{1 + \eta}{1 + 2 \eta} s \geq \frac{s}{1 + \eta},
\end{equation}
thus proving the desired inequality.
\end{proof}

\begin{claim} \label{cl:claim_two}
Consider an arbitrary $\delta > 0$, $\mathbf{x} \in \mathbb{R}^3$, and suppose 
\begin{equation}
\frac{\delta}{||\mathbf{x} - \mathbf{z}_j||_2} \leq \zeta^{-1} \quad \forall 1 \leq j \leq M,
\end{equation} 
for some $\zeta > 0$. 
Then,   
\begin{equation}
|| \mathcal{J}_{\delta}(\mathbf{x} ; \mathbf{z}_{1:M}) - \mathcal{J}_{0}(\mathbf{x} ; \mathbf{z}_{1:M}) ||_2 \leq \frac{M \ \sigma_m^{-2}}{\underset{j \leq M}{\min} \ || \mathbf{x} - \mathbf{z}_j||_2^2} \frac{1}{(1 + \zeta^2)}.
\end{equation}
\end{claim}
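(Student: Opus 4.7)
The plan is to reduce the claim to a computation of the spectral norm of the difference of rank-two matrices, one per $\mathcal{L}$-agent, by exploiting the rotational symmetry of the bearing measurement around the line of sight. Concretely, I would first observe that by the triangle inequality,
\begin{equation*}
\| \mathcal{J}_{\delta}(\mathbf{x};\mathbf{z}_{1:M}) - \mathcal{J}_{0}(\mathbf{x};\mathbf{z}_{1:M}) \|_2 \leq \sigma_m^{-2} \sum_{j=1}^M \left\| \frac{\partial \mathbf{h}_\delta^T}{\partial x}\frac{\partial \mathbf{h}_\delta}{\partial x} - \frac{\partial \mathbf{h}_0^T}{\partial x}\frac{\partial \mathbf{h}_0}{\partial x} \right\|_2,
\end{equation*}
where all Jacobians are evaluated at $(\mathbf{x},\mathbf{z}_j)$. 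The bound on the claim's right-hand side will then follow from a uniform bound on each summand together with the elementary inequality $\sum_j \|\mathbf{x}-\mathbf{z}_j\|_2^{-2} \leq M / \min_j \|\mathbf{x}-\mathbf{z}_j\|_2^2$.

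The main work is estimating a single summand. Fixing one $\mathbf{z} = \mathbf{z}_j$ and writing $r = \|\mathbf{z}-\mathbf{x}\|_2$, $r_\delta = \sqrt{r^2 + \delta^2}$, and $\mathbf{u} = (\mathbf{z}-\mathbf{x})/r$, a direct differentiation of the smoothed bearing $\mathbf{h}_\delta = (\mathbf{z}-\mathbf{x})/r_\delta$ in Equation \eqref{eq:smooth_bearing} yields a symmetric Jacobian
\begin{equation*}
\frac{\partial \mathbf{h}_\delta}{\partial \mathbf{x}} = -\frac{1}{r_\delta}\Bigl( I - \tfrac{r^2}{r_\delta^2}\mathbf{u}\mathbf{u}^T \Bigr),
\end{equation*}
which has $\mathbf{u}$ as an eigenvector. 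Squaring this matrix and using $(\mathbf{u}\mathbf{u}^T)^2 = \mathbf{u}\mathbf{u}^T$ decomposes the product $(\partial \mathbf{h}_\delta/\partial x)^T (\partial \mathbf{h}_\delta/\partial x)$ along the $(\mathbf{u},\mathbf{u}^\perp)$ split as $\frac{1}{r_\delta^2}(I - \mathbf{u}\mathbf{u}^T) + \frac{\delta^4}{r_\delta^6}\mathbf{u}\mathbf{u}^T$ after the $(r^2+\delta^2)^2 - r^2(r^2+2\delta^2) = \delta^4$ cancellation. Subtracting the $\delta=0$ counterpart $\frac{1}{r^2}(I-\mathbf{u}\mathbf{u}^T)$ yields a simultaneously diagonalizable matrix with eigenvalues $-\delta^2/(r^2 r_\delta^2)$ on $\mathbf{u}^\perp$ and $\delta^4/r_\delta^6$ along $\mathbf{u}$.

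From here the spectral norm of the single-summand difference is the larger of these two magnitudes. A quick algebraic check shows the cross-sectional eigenvalue dominates, so the norm equals $\delta^2/(r^2 r_\delta^2) = \frac{1}{r^2}\cdot\frac{1}{1 + r^2/\delta^2}$. Using the hypothesis $\delta/r \leq \zeta^{-1}$, i.e. $r^2/\delta^2 \geq \zeta^2$, this is bounded by $\frac{1}{r^2(1+\zeta^2)}$. Substituting back into the triangle-inequality bound and using the $\min_j$ inequality above gives exactly the claimed estimate.

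The only real obstacle is executing the Jacobian computation and the subsequent $(\mathbf{u},\mathbf{u}^\perp)$ diagonalization without sign slips, since the rest of the argument is a bookkeeping exercise once the rank-one-perturbation structure is in hand. I would verify the $\delta = 0$ limit at each step as a sanity check: the squared Jacobian should reduce to $(1/r^2)(I-\mathbf{u}\mathbf{u}^T)$, which is the standard projection formula for the Fisher information of a bearing measurement, and the difference should vanish to the leading order in $\delta^2/r^2$, consistent with the $1/(1+\zeta^2)$ factor obtained above.
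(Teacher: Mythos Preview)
Your proposal is correct and follows essentially the same approach as the paper: reduce via the triangle inequality to a per-landmark term $S^{\delta}-S^{0}$, exploit the simultaneous diagonalization along $\mathbf{u}=(\mathbf{z}_j-\mathbf{x})/\|\mathbf{z}_j-\mathbf{x}\|$ to read off the spectral norm, and then sum using $\sum_j \Delta_j^{-2}\leq M/\min_j \Delta_j^2$. Your write-up is in fact more explicit than the paper's, which states $\|S^{\delta}-S^{0}\|_2\leq \Delta_j^{-2}/(1+\zeta^2)$ directly without exhibiting the two eigenvalues or verifying that the transverse one dominates.
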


\begin{proof}
Letting $\Delta_j = || \mathbf{z}_j - \mathbf{x} ||_2 \ \forall 1 \leq j \leq M$, we have 
\begin{equation}
\begin{aligned}
S^{\delta}(\mathbf{x}; \mathbf{z}_j) & := \frac{\partial \mathbf{h}_{\delta}^T}{\partial \mathbf{x}}(\mathbf{x}, \mathbf{z}_j) \frac{\partial \mathbf{h}_{\delta}}{\partial \mathbf{x}}(\mathbf{x}, \mathbf{z}_j) \\
& = \frac{1}{\Delta_j^2 + \delta^2}(I_3 - \frac{\Delta_j^2 + 2 \delta^2}{(\Delta_j^2 + \delta^2)^2} (\mathbf{z}_j - \mathbf{x})(\mathbf{z}_j - \mathbf{x})^T).
\end{aligned}
\end{equation}
By noting that the family of symmetric matrices $(S^{\delta}(\mathbf{x}; \mathbf{z}_j))_{\delta \in \mathbb{R}}$ may be jointly diagonalized with respect to an orthonormal basis containing $\frac{\mathbf{z}_j - \mathbf{x}}{\Delta_j}$, and noting that the spectral norm of a symmetric matrix is just the maximum absolute value of its eigenvalues, we get
\begin{equation}
|| S^{\delta}(\mathbf{x}; \mathbf{z}_j) - S^{0}(\mathbf{x}; \mathbf{z}_j) ||_2 \leq \Delta_j^{-2} \frac{1}{1 + \zeta^2}.
\end{equation}
By the triangle inequality, we have
\begin{equation}
\begin{aligned}
\frac{ || \mathcal{J}_{\delta}(\mathbf{x} ; \mathbf{z}_{1:M}) - \mathcal{J}_{0}(\mathbf{x} ; \mathbf{z}_{1:M}) ||_2 }{\sigma_m^{-2}}& = 
|| \sum_{j = 1}^{M} (S^{\delta}(\mathbf{x}; \mathbf{z}_j) - S^{0}(\mathbf{x}; \mathbf{z}_j)) ||_2 \\
& \leq  \sum_{j = 1}^{M} || (S^{\delta}(\mathbf{x}; \mathbf{z}_j) - S^{0}(\mathbf{x}; \mathbf{z}_j)) ||_2 \\
& \leq  \sum_{j = 1}^{M} \Delta_j^{-2} \frac{1}{1 + \zeta^2} \\
& \leq  \frac{M}{\underset{j \leq M}{\min} \ \Delta_j^{2}} \frac{1}{1 + \zeta^2},
\end{aligned}
\end{equation}
as desired.
\end{proof}

Now we leverage Claims \ref{cl:claim_one} and \ref{cl:claim_two} to provide a proof of Theorem \ref{theorem:theorem_multiplicative}.

\iffalse
\begin{claim} \label{cl:claim_three}
Consider the setup of Problem \ref{eqn:problem-modified}. In particular, for any feasible $\mathbf{z}_{1:M}$, $|| \mathbf{z}_j - \mathbf{x}_i ||_2 \geq R_{min} \quad \forall i \leq N, \ \forall j \leq M$. 
Let $\eta > 0$ and $\zeta > 1$ be an arbitrary pair of real numbers. 
Define 
\begin{equation}
s_0 = \frac{\eta}{\eta + 1} \frac{R_{min}^2}{M} \sigma_m^2 (1 + \zeta^2).
\label{eqn:allowed-s}
\end{equation}
Then, choosing any $\delta \in (0, R_{min}/\zeta)$, we have the implication
\begin{equation}
\begin{aligned}
\forall \mathbf{x} \in \mathbb{R}^3 \ & : \ tr( \mathcal{J}^{\delta}(\mathbf{x} ; \mathbf{z}_{1:M})^{-1} ) = s \leq s_0  \\
\quad 
& \Rightarrow 
\quad 
\frac{s}{1 + \eta} \leq tr( \mathcal{J}^{0}(\mathbf{x} ; \mathbf{z}_{1:M})^{-1} ) \leq s (1 + \eta).
\end{aligned}
\end{equation}
By symmetry the same implication holds with the roles of $\mathcal{J}^{\delta}$ and $\mathcal{J}^{0}$ reversed.
\end{claim}
\fi

\begin{proof}(of Theorem \ref{theorem:theorem_multiplicative})
Since the prior covariance is positive definite, $\mathcal{J}_{\delta} \succ 0$ for all $\delta$.
Next, denote the eigenvalues of $\mathcal{J}_{\delta}$ in decreasing order by $\lambda_1^{\delta} \geq \lambda_2^{\delta} \geq \lambda_3^{\delta} > 0$.
It then follows that 
\begin{equation}
tr( (\mathcal{J}_{\delta})^{-1} ) = \sum_{j=1}^{3} \frac{1}{\lambda_j^{\delta}}.
\end{equation}
By Weyl's inequality \cite{weyl1912asymptotische}, we have 
\begin{equation}
| \lambda_j^{\delta} - \lambda_j^{0} | \leq || \mathcal{J}_{\delta} - \mathcal{J}_{0} ||_2 \leq 
\frac{M \ \sigma_m^{-2}}{R_{min}^2} \frac{1}{(1 + \zeta^2)} \quad \forall j \leq 3,
\end{equation}
where the latter inequality follows from Claim \ref{cl:claim_two}. 
As a result, making substitutions 
\begin{equation}
\lambda_j \rightarrow \lambda_j^{\delta}, \ \tilde{\lambda}_j \rightarrow \tilde{\lambda}_j^{\delta}, \ 
\delta \rightarrow \frac{M \ \sigma_m^{-2}}{R_{min}^2} \frac{1}{(1 + \zeta^2)}
\end{equation}
in Claim \ref{cl:claim_one}, we get that for $s_0$ given by Equation \ref{eqn:allowed-s}, the theoreem holds.
\end{proof}

Finally we turn to proving Corollary \ref{cor:guarantee_corollary} with the help of an additional claim.

\begin{claim}
Denote the value of Problem \ref{prob:problem_delta} at a given $\mathbf{z}_{1:M}$ for a given $\delta$ by $V^{\delta}$.
Within the same setup as for Theorem \ref{theorem:theorem_multiplicative}, we have 
\begin{equation}
V^{\delta} = s \leq s_0 \ \Rightarrow \ \frac{V^{\delta}}{1 + \eta} \leq V^{0} \leq V^{\delta} (1 + \eta).
\end{equation}
By symmetry the same implication holds with the roles of $V^{\delta}$ and $V^{0}$ reversed.
\end{claim}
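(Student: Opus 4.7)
The plan is to reduce the claim to a pointwise application of Theorem \ref{theorem:theorem_multiplicative} at every setpoint $\mathbf{x}_i$, and then to pass to the maximum over $i \in [N]$. Recall that
\[
V^{\delta}(\mathbf{z}_{1:M}) \;=\; \max_{i \in [N]} \ tr\!\big( \Sigma_{\delta}(\mathbf{x}_i;\mathbf{z}_{1:M}) \big) \;=\; \max_{i \in [N]} \ tr\!\big( \mathcal{J}_{\delta}(\mathbf{x}_i;\mathbf{z}_{1:M})^{-1} \big),
\]
with the analogous expression for $V^{0}$. The hypothesis $V^{\delta} = s \leq s_0$ therefore says precisely that $s_i := tr(\mathcal{J}_{\delta}(\mathbf{x}_i;\mathbf{z}_{1:M})^{-1}) \leq s_0$ for every $i \in [N]$.

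First I would observe that the feasibility of $\mathbf{z}_{1:M}$ in Problem \ref{prob:problem_delta} guarantees $\|\mathbf{z}_j - \mathbf{x}_i\|_2 \geq R_{\min}$ for all $i,j$, so for the chosen $\delta \in (0, R_{\min}/\zeta)$ the ratio $\delta / \|\mathbf{z}_j - \mathbf{x}_i\|_2 \leq \zeta^{-1}$ holds uniformly in $i$. This is exactly the range-hypothesis needed to invoke Theorem \ref{theorem:theorem_multiplicative} at the point $\mathbf{x} = \mathbf{x}_i$. Applying the theorem at each $i$ gives
\[
\frac{s_i}{1+\eta} \;\leq\; tr\!\big( \mathcal{J}_{0}(\mathbf{x}_i;\mathbf{z}_{1:M})^{-1} \big) \;\leq\; s_i\,(1+\eta) \qquad \forall i \in [N].
\]

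From here the conclusion is immediate. Taking the maximum over $i \in [N]$ on the right inequality yields $V^{0} \leq (1+\eta)\, V^{\delta}$. For the left inequality, pick an index $i^{\star}$ achieving the max in $V^{\delta}$; then
\[
V^{0} \;\geq\; tr\!\big(\mathcal{J}_{0}(\mathbf{x}_{i^{\star}};\mathbf{z}_{1:M})^{-1}\big) \;\geq\; \frac{s_{i^{\star}}}{1+\eta} \;=\; \frac{V^{\delta}}{1+\eta}.
\]
The symmetric statement (with the roles of $V^{\delta}$ and $V^{0}$ swapped) follows the same way, invoking the "reversed-roles" clause of Theorem \ref{theorem:theorem_multiplicative}.

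There is no real obstacle here; the proof is a short argument once one notices that the minimax structure of $V^{\delta}$ and $V^{0}$ commutes cleanly with the pointwise multiplicative bounds. The only thing worth being careful about is that the hypothesis $V^{\delta} \leq s_0$ must be used to guarantee the pointwise threshold $s_i \leq s_0$ for \emph{every} $i$ (and not just for the maximizing one), because Theorem \ref{theorem:theorem_multiplicative} is an implication whose premise must be verified at each setpoint one uses it at.
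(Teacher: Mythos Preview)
Your proof is correct and follows essentially the same approach as the paper: apply the pointwise bounds of Theorem \ref{theorem:theorem_multiplicative} at each setpoint $\mathbf{x}_i$ (using $V^{\delta}\leq s_0$ to ensure the threshold is met for every $i$), then take the maximum over $i$ for the upper bound and evaluate at a maximizing index for the lower bound. If anything, your write-up is slightly cleaner, since you explicitly invoke Theorem \ref{theorem:theorem_multiplicative} and the feasibility constraint $\|\mathbf{z}_j-\mathbf{x}_i\|\geq R_{\min}$, whereas the paper's version cites Claim \ref{cl:claim_one} directly at this step.
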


\begin{proof}
%We seperately prove both inequalities on the right. 
To prove the right-hand inequality, note that by the definition of $V^{\delta}$, we have $\forall i \leq N$
\begin{equation}
tr( \mathcal{J}_{\delta}(\mathbf{x}_i ; \mathbf{z}_{1:M})^{-1} ) \leq V_{\delta} = s \leq s_0.
\end{equation}
Hence, by Claim \ref{cl:claim_one}, we have $\forall i \leq N$
\begin{equation}
tr( \mathcal{J}_{0}(\mathbf{x}_i ; \mathbf{z}_{1:M})^{-1} ) \leq tr( \mathcal{J}_{\delta}(\mathbf{x}_i ; \mathbf{z}_{1:M})^{-1} )(1 + \eta) \leq (1 + \eta) V^{\delta},
\end{equation}
and by taking the maximum of the latter inequality over $i \leq N$, the inequality follows.
To prove the left-hand inequality, again by definition, there exists an $i \leq N$ such that 
\begin{equation}
tr( \mathcal{J}_{\delta}(\mathbf{x}_i ; \mathbf{z}_{1:M})^{-1} ) = V^{\delta} = s \leq s_0.
\end{equation}
Hence, by Claim \ref{cl:claim_one}, we have
\begin{equation}
V^{0} \geq tr( \mathcal{J}_{0}(\mathbf{x}_i ; \mathbf{z}_{1:M})^{-1} ) \geq \frac{tr( \mathcal{J}_{\delta}(\mathbf{x}_i ; \mathbf{z}_{1:M})^{-1} )}{1 + \eta} = \frac{V^{\delta}}{1 + \eta},
\end{equation}
as desired.
\end{proof}

\begin{proof} (of Corollary \ref{cor:guarantee_corollary})
Let $\mathbf{z}_{*}^{\delta}$ and $\mathbf{z}_{*}^{0}$ be optimal solutions for Problem \ref{prob:problem_delta} and Problem \ref{prob:problem_one}, respectively. 
By assumption, we have 
\begin{equation}
\mathbf{z}_{*}^{\delta} \in \mathcal{S} \ \Rightarrow \ V^{\delta}( \mathbf{z}_{*}^{\delta} ) \leq s_0.
\end{equation}
By Claim \ref{cl:claim_one}, we then have 
\begin{equation}
V^{0}( \mathbf{z}_{*}^{\delta} ) \leq (1 + \eta) V^{\delta}( \mathbf{z}_{*}^{\delta} ) \leq (1 + \eta) s_0.
\end{equation}
We now distinguish two cases. 
First, if $V^{0}(\mathbf{z}_{*}^{0}) > s_0$, the inequality trivially holds.
Otherwise,  $V^{0}(\mathbf{z}_{*}^{0}) \leq s_0$, and by the previous claim, we have 
\begin{equation}
V^{0}(\mathbf{z}_{*}^{0}) \geq \frac{V^{\delta}(\mathbf{z}_{*}^{0})}{1 + \eta} \geq \frac{V^{\delta}(\mathbf{z}_{*}^{\delta})}{1 + \eta} \geq \frac{V^{0}(\mathbf{z}_{*}^{\delta})}{(1 + \eta)^2},
\end{equation}
and so the result follows.
\end{proof}

\end{document}